\newcommand{\removelatexerror}{\let\@latex@error\@gobble}
\begin{document}

\title{Distributed Robust Learning-Based Backstepping Control Aided with Neurodynamics for Consensus Formation Tracking of Underwater Vessels}

\author{Tao Yan,~\IEEEmembership{Graduate Student Member,~IEEE}, Zhe Xu,~\IEEEmembership{ Member,~IEEE}, Simon X. Yang,~\IEEEmembership{Senior Member,~IEEE}
\thanks{This work was supported by the Natural Sciences and Engineering Research Council (NSERC) of Canada. \textit{(Corresponding author: Simon X. Yang.)}}
\thanks{T. Yan and S. X. Yang are with the Advanced Robotics and Intelligent Systems (ARIS) Laboratory, School of Engineering, University of Guelph, Guelph, ON N1G2W1, Canada (e-mails: tyan03@uoguelph.ca; syang@uoguelph.ca).}
\thanks{Z. Xu is with the Intelligent and Cognitive Engineering (ICE) Laboratory, Department of Mechanical Engineering, McMaster University, Hamilton, ON L8S4L8, Canada (e-mail: xu804@mcmaster.ca).}}



\maketitle

\begin{abstract}
This paper addresses distributed robust learning-based control for consensus formation tracking of multiple underwater vessels, in which the system parameters of the marine vessels are assumed to be entirely unknown and subject to the modeling mismatch, oceanic disturbances, and noises. Towards this end, graph theory is used to allow us to synthesize the distributed controller with a stability guarantee. Due to the fact that the parameter uncertainties only arise in the vessels' dynamic model, the backstepping control technique is then employed. Subsequently, to overcome the difficulties in handling time-varying and unknown systems, an online learning procedure is developed in the proposed distributed formation control protocol. Moreover, modeling errors, environmental disturbances, and measurement noises are considered and tackled by introducing a neurodynamics model in the controller design to obtain a robust solution. Then, the stability analysis of the overall closed-loop system under the proposed scheme is provided to ensure the robust adaptive performance at the theoretical level. Finally, extensive simulation experiments are conducted to further verify the efficacy of the presented distributed control protocol.
\end{abstract}

\begin{IEEEkeywords}
Underwater vessel fleet, consensus formation tracking, distributed robust learning-based control, backstepping control, neurodynamics model.
\end{IEEEkeywords}

\section{Introduction}
\IEEEPARstart{A}{utonomous} underwater vessels (AUVs) are referred to as the unmanned devices capable of performing specific missions automatically offshore or even in the deep sea environments for a long period of time. Due to that of capabilities, such systems have been applied to many practical productions and processes over the past few decades, such as oceanographic mapping, oil and gas exploration, submarine pipeline inspection, and even for military purposes \cite{8809889,4,8372958}. Nevertheless, as the increase of task complicity as well as the demand for high reliable sensing capabilities, more expensive or \textit{ad hoc} ships are sometimes required to guarantee a quality completion of assigned tasks. Recently, as an efficient alternative to the employment of such tailored devices, multiple relatively simple, small and cheap AUVs are used to construct a fleet to accomplish the corresponding missions in a collaborative way \cite{5,zhu2020novel}. Apart from the aforementioned features, such systems also are of several inherent properties, including ease of scalability, robust data collection, wide-area coverage, good fault-tolerant ability, etc. The major challenges of applying multiple vessel systems rely on the fact that it is imperative to synthesize efficient coordination strategies as well as motion control algorithms such that the individuals in the fleet can be driven to work together for common objectives. Formation tracking control, identified as one of the fundamental problems behind multi-AUVs coordination and cooperation, has attracted considerable attention in recent decades \cite{6,9709103}. While such a multi-agent coordination problem can also be found in other robotic platforms, e.g., unmanned ground robots, unmanned aerial vehicles, and spacecraft, due to more complicated and unpredictable underwater conditions as well as the nonlinear uncertain characteristic of AUVs, the development of high performance formation tracking control protocols for such systems may be more challenging and is still open for the societies of control and ocean engineering \cite{lakhekar2019disturbance,wang2020active}.

Roughly speaking, the formation control of an underwater marine vessel fleet can be typically divided into two portions, that is, coordination strategies and motion control schemes. As to the former, there are a few commonly used methodologies for coordinating multiple vessels to form a certain configuration, such as leader-following method \cite{9694518,9557752,9632455,8651430}, virtual structure method \cite{9,10}, behavior-based approaches \cite{6,8}, artificial potential field approaches \cite{13,14}, etc. In addition to the group coordination, owing to the highly nonlinear hydrodynamic characteristic of the AUVs as well as the unpredictable marine conditions, there is also a pressing need for efficient and robust motion control schemes to drive the vessels to reach and maintain the prescribed formation precisely. To tackle these technical challenges in control, Millán \textit{et al.} proposed a virtual leader based H$_2$/H$_{\infty}$ optimal control scheme with a feedforward compensator to steer fleets of AUVs to form a formation so that the communication issues, i.e., package dropouts and delays can be addressed \cite{6547175}. While the linear quadratic based optimal solution can yield an effective formation performance, it is merely suitable for restrictive operating conditions, that is, only limited local stability properties can be guaranteed. To extend to a broader operating area, nonlinear control techniques have received much attention in the last few decades. The formation tracking problem of multiple underwater vessels was addressed in \cite{5246389} where the goal of vessels is not only to maintain a desired spatial formation pattern but also to track a set of waypoints using a line-of-sight strategy, for which the leader-following modeling method is utilized and on the top of that, a feedback linearization based nonlinear controller was then derived to ensure the globally asymptotic stability. Formation tracking control was studied and a Lyapunov-based model predictive controller was developed, where an extended state observer was incorporated so that the proposed controller not only obtained an optimal performance but also with certain robustness against the maritime disturbances \cite{8878014}. In this work, the authors assumed that the AUVs modeling information is able to be accessed.  To address the variable added mass and poor communication capacities, an adaptive sliding mode control (SMC) protocol was developed by means of the superb robustness properties of SMC techniques to any bounded matched disturbances \cite{22}.

However, almost all of the aforementioned methods employ either a simplified dynamic model or a kinematics-based model to design the corresponding formation controllers, which unavoidably leads to a more restrictive control design and makes it unlikely to track a fast varied 3-dimensional (3D) trajectory. Indeed, designing formation controllers for marine vessels in 3D space with full dynamic models is more challenging due to more degrees of freedom (DOF) and uncertainties to be tackled. Towards this end, Hou and Cheah developed an adaptive proportional-derivative control scheme for multi-AUVs formation control on the basis of a completed dynamic model with 6-DOF where less knowledge regarding the plant is used, that is, with some uncertainties in gravitational, buoyancy forces and oceanic disturbances \cite{5}. While the method presented has fewer control parameters whose physical meanings are also clear, the resulting formation accuracy is not always good enough owing to the time-varying uncertainties. To attain a more accurate performance, an adaptive neural network-based solution was provided \cite{16}, in which the neural network was incorporated in the formation control design to approximate the part of nonlinear uncertainties resulting from the frictions, marine disturbances, and unmodeled dynamics. Nonetheless, the derived formation protocol was based on a virtual leader scheme for which each vehicle in the fleet was treated as an independent individual and there are no actual connections between neighbors. On the other hand, considering the unavailability of velocity measurements in practice, an extended state observer (ESO) based integral sliding mode control (ISMC) method was proposed, in which the ESO was aimed to provide real-time estimations for both vessels' velocities and the external disturbances, followed then by an ISMC to adaptively handle the rest of the internal uncertainties \cite{cui2017extended}. Despite the fact that the SMC based control schemes possess good robustness, such methods always suffer from the chattering issue, which may excite the unmodeled high frequency dynamics of the systems in practice. Therefore, adaptive higher-order SMC schemes were developed based on a gain adaptation mechanism to mitigate the chattering adequately while maintaining the sliding mode as much as possible \cite{EDWARDS2016183}. While the chattering can be attenuated, the resulting controllers rely on the assumption of a bounded derivative of disturbances and are, besides, still quite sensitive to measurement noises, both of which significantly restrict their applications to many practical situations.

To the best of our knowledge, robust learning enabled consensus formation tracking control of AUVs fleet in 3D space has not been sufficiently resolved in the literature. As discussed above, the results obtained are not applicable to the situation studied in this paper. The main contributions are summarized as follows:
\begin{enumerate}
\item{ A novel distributed robust learning based control methodology is first proposed to address the formation control problem considered, in which it is assumed that the system parameters of AUVs are completely unknown and subject to the modeling errors, environmental disturbances, and measurement noises.}
\item{An online learning procedure is developed in the control loop, responsible for the real-time estimation of plant parameters so that a better steady-state formation accuracy can be expected.}
\item{Backstepping technique is employed to facilitate the learning based nonlinear control design. Moreover, the rest of system uncertainties, including modeling errors, external disturbances, and noises, are addressed effectively by a neurodynamics based robust controller.}
\item{Rigorous stability analysis for the resulting closed-loop formation system is conducted using the Lyapunov stability theory to guarantee robust formation performance at the theoretical level.}
\end{enumerate}

The rest of the article is outlined as follows. Some basic knowledge of graph theory is presented and the formation control problem considered is formulated in Section \ref{s2}. In Section \ref{s3}, an online learning procedure is developed for each AUV. Section \ref{s4} addresses the learning based formation tracking of fleets of underwater vessels subject to both modeling mismatch and exogenous disturbances. Section \ref{s5} provides extensive simulation validations, and Section \ref{s6} concludes this work.

\section{PRELIMINARY AND PROBLEM FORMULATION}\label{s2}
In this section, the basic knowledge regarding the graph theory is presented briefly. The mathematical model of AUVs used for formation control design is described, and moreover the objective of considered formation tracking control of AUV fleet is formulated.

\subsection{Preliminary on graph theory}\label{s2.1}
The communication topology established among the individuals in a fleet of marine vessels can be modeled by a weighted directed graph $G = \{ {V,E,A} \} $, thus constructing a networked system, and each vessel in such a system can be treated as a node. As for a simple time-invariant graph $G$, it is characterized by the vertex set $V = \{ { \nu _1, \nu _2, \ldots, \nu _N } \}$ , the edge set $E \subseteq V \times V$, and the weighted adjacency matrix $A = \left[ {{a_{ij}}} \right] \in \mathbb{R}^{\text{N} \times \text{N}}$. The element $\nu _i$ in vertex set $V$ denotes $i$-th AUV, and the index $i$ belongs to an accountable index set $\Gamma  = \left\{ {1, \ldots ,N} \right\}$. If there exists the information exchange between AUV $i$ and AUV $j$, then, say, there is an edge between AUVs $i$ and $j$, i.e., $\left( {{\nu _i},{\nu _j}} \right) \in E$, and ${a_{ij}} = {a_{ji}} > 0$. Particularly, call vehicle $j$ a neighbor of vehicle $i$, and the set of neighbors is denoted by ${N_i} = \left\{ {j | {\left( {{\nu _i},{\nu _j}} \right) \in E} } \right\}$. Otherwise, there is no edge among them, and $a_{ij} = a_{ji} = 0$. Moreover, we define $a_{ii} = 0$ for all $i \in \Gamma$, and the out-degree $d_{i} = \sum _{j \in N_i} {a_{ij}}$ associated with node $i$. Afterwards, the degree matrix as well as the Laplacian matrix of the graph $G$ can then be defined as $D = \text{diag} \left\{ {d_1, \ldots, d_N}  \right\} \in \mathbb{R} ^{\text{N} \times \text{N}}$ and $L = D-A$, respectively. A path in graph is a sequence consisted of a set of successive adjacent nodes, starting from node $i$ and ending at node $j$. If any two nodes in a graph $G$ have at least one path, then, say, graph $G$ is connected.

In order to make the AUVs fleet move along with a desired path as a whole, a reference trajectory must be defined ahead of time. The availability to the information of reference trajectory for $i$-th AUV is indicated by a parameter $b_i$; that is, if AUV $i$ is permitted to access this information, then $b_i >0$; otherwise, $b_i = 0$. Define $B = \text{diag} \left\{ {b_1, \ldots, b_N} \right\}$. 
\newtheorem{assumption}{Assumption}
\begin{assumption} \label{assumption1}
For the considered multi-AUV formation control network, graph $G$ is connected, and moreover there is at least one AUV able to receive the information of reference trajectory, i.e., the elements of $B$ are not all equal to zero.
\end{assumption}

\newtheorem{lemma}{Lemma}
\begin{lemma}\label{lemma1}
if Assumption \ref{assumption1} holds, then matrix $L+B$ is positive definite.
\end{lemma}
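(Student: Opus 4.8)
The plan is to show that $L+B$ is symmetric (which is immediate, since $L = D-A$ with $A = A^\top$ by the undirectedness assumption $a_{ij}=a_{ji}$, and $B$ is diagonal) and then establish that its associated quadratic form is strictly positive on all nonzero vectors. For an arbitrary $x = (x_1,\ldots,x_N)^\top \in \mathbb{R}^N$, I would first invoke the standard identity for the graph Laplacian,
\begin{equation}
x^\top L x = \frac{1}{2}\sum_{i=1}^{N}\sum_{j\in N_i} a_{ij}(x_i-x_j)^2,
\end{equation}
so that
\begin{equation}
x^\top (L+B)x = \frac{1}{2}\sum_{i=1}^{N}\sum_{j\in N_i} a_{ij}(x_i-x_j)^2 + \sum_{i=1}^{N} b_i x_i^2 \ge 0.
\end{equation}
Since $L+B$ is symmetric and the form is nonnegative, it only remains to prove that $x^\top(L+B)x = 0$ forces $x=0$.

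Suppose $x^\top(L+B)x=0$. Because all $a_{ij}\ge 0$ and all $b_i\ge 0$, both sums above must vanish term by term: $a_{ij}(x_i-x_j)^2 = 0$ for every edge $(\nu_i,\nu_j)\in E$, and $b_i x_i^2 = 0$ for every $i$. The first condition says $x_i = x_j$ whenever $\nu_i$ and $\nu_j$ are adjacent; chaining this equality along paths and using that $G$ is connected (Assumption~\ref{assumption1}), I conclude that all components of $x$ are equal, say $x_i = c$ for all $i$. By Assumption~\ref{assumption1} there is at least one index $k$ with $b_k>0$, and the second condition then gives $b_k c^2 = 0$, hence $c=0$. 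Therefore $x=0$.

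This shows $x^\top(L+B)x > 0$ for all $x\neq 0$, and combined with symmetry this is exactly positive definiteness of $L+B$. The only delicate point is the connectivity argument: I must make sure the path-chaining step is carried out for \emph{every} node (so that the "all components equal" conclusion is global, not just within a component), which is precisely where the hypothesis that $G$ is connected is used in full. Everything else is the routine Laplacian quadratic-form computation, so I do not expect a genuine obstacle here.
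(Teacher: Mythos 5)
Your proof is correct. The paper itself states Lemma~\ref{lemma1} without proof, treating it as a standard fact from the consensus literature, and your argument is exactly the canonical one: symmetry of $L+B$ (using $a_{ij}=a_{ji}$), the Laplacian quadratic-form identity, nonnegativity of $x^{\rm T}(L+B)x$, and then triviality of the kernel via connectedness of $G$ plus the existence of some $b_k>0$ from Assumption~\ref{assumption1}. There is no gap; the path-chaining step you flag as delicate is handled correctly by the connectivity hypothesis.
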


\begin{figure}[!t]
\centering
\includegraphics[width=2in]{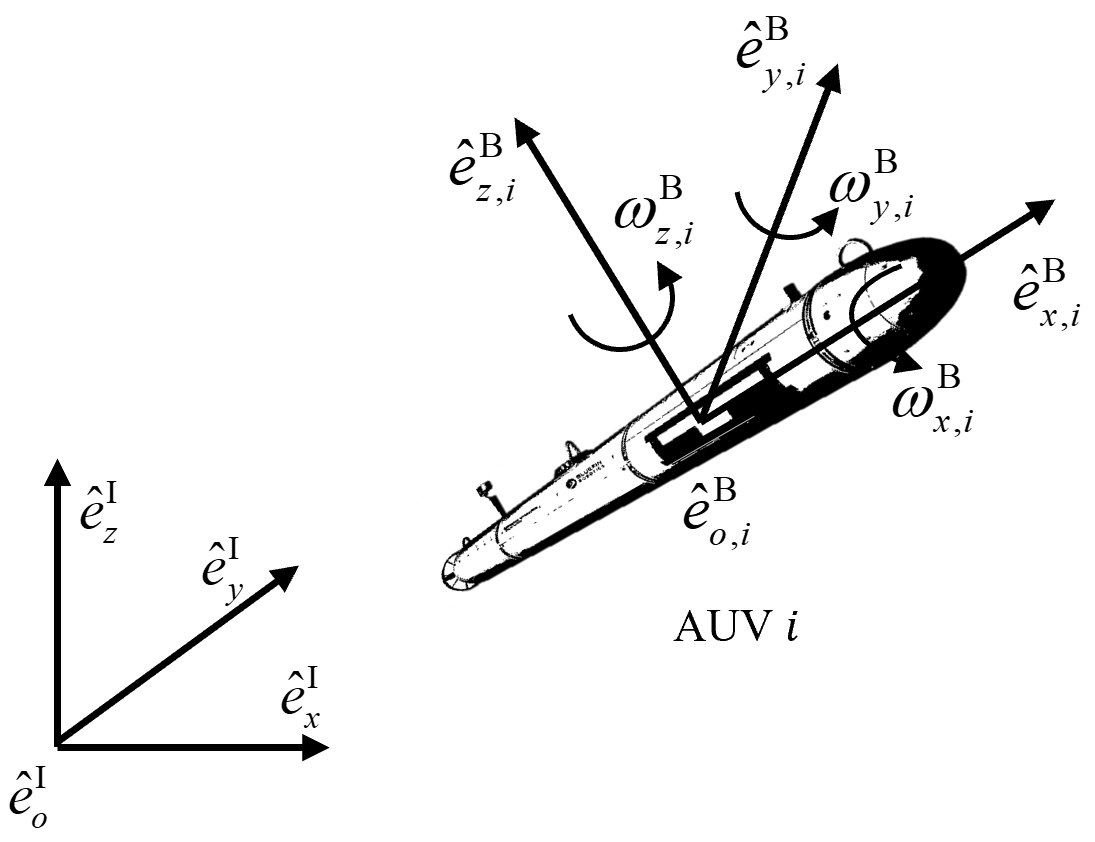}
\caption{Schematic diagram of $i$-th AUV .}
\label{fig1}
\end{figure}

\subsection{Problem formulation}
The robust learning-based consensus formation tracking of $N$ numbers of AUVs in 3-dimensional space is addressed in this article. As presented in the work of Yan \textit{et al.} \cite{BSMC}, the kinematic and dynamic models of $i$-th AUV $(i \in \Gamma)$ can be described as
\begin{align}
& \dot  \eta _i = J_i (\eta_{2,i})v_i,  \label{eq1a} \\
& M_i \dot v_i + C_i (v_i) v_i + D_i (v_i)v_i + G_i(\eta _i)= \tau _i + d_i , \label{eq1b}
\end{align}
where $\eta _i = \left[\eta _{1,i}^{\rm T}, \eta _{2,i}^{\rm T} \right] ^{\rm T} \in \mathbb{R}^{6}$, $\eta _{1,i} = \left[x_i, y_i, z_i \right] ^{\rm T} \in \mathbb{R}^{3}$, $\eta _{2,i} = \left[\phi_i, \theta_i, \psi_i \right] ^{\rm T} \in \mathbb{R}^{3}$ denote the position and orientation of $i$-th AUV, respectively, which are expressed in the Earth-fixed frame $\hat E^{\rm I} = \left\{ {\hat e_o^{\rm I}, \hat e_x^{\rm I}, \hat e_y^{\rm I},\hat e_z^{\rm I}} \right\}$, and $ v_i = \left[v_{1,i}^{\rm T}, v_{2,i}^{\rm T} \right] ^{\rm T} \in \mathbb{R}^{6 }$, $v_{1,i} = \left[v_{x,i}, v_{y,i}, v_{z,i} \right] ^{\rm T} \in \mathbb{R}^{3}$, $v_{2,i} = \left[\omega_{x,i}, \omega_{y,i}, \omega_{z,i} \right] ^{\rm T} \in \mathbb{R}^{3}$ are the $i$-th AUV’s translational and rotational velocities, respectively, described in vessel’s body-fixed frame $\hat E^{\rm B} = \left\{ {\hat e_{o,i}^{\rm B}, \hat e_{x,i}^{\rm B}, \hat e_{y,i}^{\rm B},\hat e_{z,i}^{\rm B}} \right\}$. The sketch of the AUV $i$ is illustrated in Fig. \ref{fig1}. The transformation between two frames is described by the Jacobian matrix $J_i (\eta_{2,i})$. $M_i \in \mathbb{R}^{6 \times 6}$ is the inertia matrix, $C_i(v_i) \in \mathbb{R}^{6 \times 6}$ the Coriolis and centripetal matrix, $D_i(v_i) \in \mathbb{R}^{6 \times 6}$ the hydrodynamic damping matrix, and $G_i(\eta _i) \in \mathbb{R}^{6}$  the gravitational related term. The generalized control input vector is represented by  $\tau _i \in \mathbb{R}^{6}$ , and $d_i \in \mathbb{R}^{6}$ is the lumped disturbance, describing both the modeling errors and exogenous disturbances induced by the wind, waves and ocean currents. The detailed definitions for those matrices can refer to the previous work \cite{BSMC}.

\newtheorem{remark}{Remark}
\begin{remark}\label{remark1}
It is in fact difficult to access the accurate values of above mentioned system matrices, and owing to the hydrodynamic phenomena in practice most of these values may even be subject to variations. To this end, this paper addresses the consensus formation tracking of multiple underwater vehicles where all of these system parameters are assumed to be entirely unknown, not just the hydrodynamic related terms, and besides that, the modeling errors, external disturbances and noises are also taken into consideration to make our approach robust for more practical situations.
\end{remark}

In the problem of formation tracking, a desired formation pattern of a fleet can be determined by a set of predefined relative postures (positions and orientations) between the vessels $i$ and its neighbors $j$, $(i,j \in \Gamma)$; specifically, let the desired postures for pair ($i,j$) be $\Delta _{ij} = \left[ \delta _{x,ij}, \delta _{y,ij}, \delta _{z,ij}, \delta _{\phi,ij}, \delta _{\theta,ij}, \delta _{\psi,ij} \right] ^{\rm T} \in  \mathbb{R}^{6}$. It is noted that the orientation of fleets of vessels should be aligned, that is, the relative attitudes $\left[ \delta _{\phi,ij}, \delta _{\theta,ij}, \delta _{\psi,ij} \right] ^{\rm T} $ of vessels are always set to $\mathbf{0}_{3}$. In addition to the shape maintenance, in many practical missions the fleets are also required to follow a prescribed trajectory. In this respect, let $\eta _{1,i}^d = \left[ x_i^d, y_i^d, z_i^d \right]^{\rm T} \in  \mathbb{R}^{3} $ be the desired second-order-differentiable-bounded trajectory, $\eta _{2,i}^d = \left[ \phi _{i}^d, \theta _{i}^d, \psi _{i}^d \right]^{\rm T} \in \mathbb{R}^{3}$ be the corresponding second-order-differentiable-bounded desired attitude for the $i$-th vessel, and $\eta _i ^d = \left[ \eta _{1,i}^d, \eta _{2,i}^d \right] ^{\rm T}$. The objective of this paper is concerned with synthesizing a distributed control law for $\tau_i$ $\left(i \in \Gamma \right)$ where the parameters of the systems are assumed to be completely unknown and the impacts of modeling errors and environmental disturbances are both considered, such that the following coordinated motion of a fleet of vessels can be achieved
\begin{itemize}
    \item {the preassigned desired relative postures $\Delta _{ij}$ can be formed and maintained,}
    \item {and each vessel is able to follow a predefined trajectory $\eta _i^d$.}
\end{itemize}
We have the following assumption.
\begin{assumption} \label{assump2}
 It is assumed that the lumped disturbance $d_i (t)$ enforced on $i$-th vessel ($i \in \Gamma$) that describes both model mismatching and environmental disturbances is bounded and satisfies
\begin{equation}
    \left\| {{d_i}(t) } \right\|  \le \rho _1,
\end{equation}
where $\rho _1$ is a certain positive constant.
\end{assumption}

\section{PARAMETER ESTIMATOR DESIGN} \label{s3}
This section addresses the online model learning for each individual vessel. To do so, a parameter estimator shall be designed by proposing an effective adaptation scheme so that the parameter unavailable and time-varying issues can be handled in a real-time manner. It is highlighted that all parameters in vessels' dynamic model \eqref{eq1b} are supposed to be unknown and required to be estimated, not just the hydrodynamic related terms. Moreover, the input-to-state stability properties of the proposed parameter estimator is established.

To ease the design of parameter estimator, we may first rewrite the dynamics \eqref{eq1b} into the following linear form with respect to system parameters 
\begin{equation} \label{eq7_}
    \dot v_i = \Psi_i \left( {v_i, \eta _i, \tau_i } \right ) \theta _i ^{\star} + \tilde  d_i ,
\end{equation}
where $\Psi _i \in \mathbb{R}^{6 \times 24}$ is referred to as a regression matrix, depending on the current states and inputs of a vessel. Vector $\theta _i ^{\star} \in \mathbb{R}^{24}$ is the true value of the system parameters, and $\tilde d_i$ describes both the modeling errors and marine disturbances acting on the $i$-th vessel. It is easy to verify that $\tilde{d}_i$ is also bounded in accordance with the Assumption \ref{assump2}, i.e., $\| \tilde{d}_i(t) \|  \le \rho _2$, where $\rho _2$ is some positive constant. $\Psi _i $ is defined as follows
\begin{align}
    \Psi _{i,1} &= \left[ v_{z,i}\omega _{y,i}, v_{y,i} \omega _{z,i}, v_{x,i}, \tau _{1,i}, \text{0}_{20}  \right], \nonumber\\
    \Psi _{i,2} &= \left[ \text{0}_{4}, v_{z,i}\omega _{x,i}, v_{x,i} \omega _{z,i}, v_{y,i}, \tau _{2,i}, \text{0}_{16} \right], \nonumber\\
    \Psi _{i,3} &= \left[ \text{0}_{8}, v_{y,i}\omega _{x,i}, v_{x,i} \omega _{y,i}, v_{z,i}, \tau _{3,i}, \text{0}_{12} \right], \nonumber\\
    \Psi _{i,4} &= \left[ \text{0}_{12}, v_{y,i} v_{z,i}, \omega _{y,i}\omega _{z,i},\omega _{x,i}, \tau _{4,i}, \text{0}_{8} \right], \nonumber\\
    \Psi _{i,5} &= \left[ \text{0}_{16}, v_{x,i} v_{z,i}, \omega _{x,i}\omega _{z,i},\omega _{y,i}, \tau _{5,i}, \text{0}_{4} \right], \nonumber\\
    \Psi _{i,6} &= \left[ \text{0}_{20}, v_{x,i} v_{y,i}, \omega _{x,i}\omega _{y,i},\omega _{z,i}, \tau _{6,i} \right],
\end{align}
where $\Psi _{i,j}$, $j \in \left\{ 1,2,3,4,5,6 \right\} $ represents each row of the regression matrix $\Psi _{i}$, and the subscript of $\text{0}$ in each row indicates the number of consecutive zero elements. 

Due to the assumption that the system parameters are completely unknown for the control synthesis, our first goal is to design an adaptation scheme for parameter vector $\theta _i ^{\star}$ such that the resulting estimate can approach its real value $\theta _i ^{\star}$ as $t \to \infty$. In other words, an online learning process shall be enabled here to provide a real-time estimation for the parameter vector $\theta _i ^{\star}$ based on input-output data, i.e., the pair of $v_i$, $\eta _i$, and $\tau _i$.

For this purpose, we may design a parameter estimator for $i$-th vessel $(i\in \Gamma)$ with the following adaptation law
\begin{align}
    \dot{\hat{v}}_i &= \Psi _i \left( {v_i, \eta _i, \tau _i} \right) \theta _i - L_i \left( {\hat{v}_i - v_i} \right), \label{eq8a_} \\
    \dot \theta _i &= - \Psi _i \left( {v_i, \eta _i, \tau_i } \right )^{\rm T} P_i \left( {\hat{v}_i - v_i} \right), \label{eq8b_}
\end{align}
where $\hat{v}_i \in \mathbb{R}^{6}$ and $\theta _i\in \mathbb{R}^{24}$ are the estimates of $ v_i$ and $\theta _i^{\star}$, respectively, and $L_i \in \mathbb{R}^{6 \times 6}$ and $P_i \in \mathbb{R}^{6 \times 6}$ are the gain matrices of the proposed estimator to be designed.
\begin{remark} \label{remark2_}
It should be stressed that the parameter estimator presented is consisted of two subsystems. Specifically, the first subsystem \eqref{eq8a_} actually is a standard state observer used to observe the state $v_i$, but is derived based on the current estimate $\theta _i$. Since we assume that the full state measurements of $i$-th vessel are available that can be treated as the supervised signals for the parameter estimation, the adaptation scheme for $\theta _i$ is then driven by the deviation between $\hat{v}_i$ and its actual value $v_i$. In other words, the goal now is cast to seek an adaptation law for $\theta _i$ such that the error of $\hat{v}_i$ and $v_i$ could be minimized as $t \to \infty$. In what follows, we show that our proposed adaptive mechanism \eqref{eq8b_} is able to achieve this purpose.
\end{remark}
Define first the $i$-th vessel's observation error ${\tilde{v}_i} = \hat{v}_i - v_i$ and estimation error $\tilde{\theta}_i = \theta _i- \theta _i ^{\star}$, and their derivatives can be readily obtained as $\dot{\tilde{v}}_i = \dot{\hat{v}}_i - \dot{v}_i$ and $\dot{\tilde{\theta}}_i = \dot{\theta}_i$, respectively. Plugging the system dynamics \eqref{eq7_} as well as the parameter estimator \eqref{eq8a_} and \eqref{eq8b_} in, yield the following error dynamics for parameter estimation
\begin{align}
    \dot{\tilde{v}}_i &= \Psi _i\left( {v_i, \eta _i, \tau _i} \right)\tilde{\theta}_i - L_i \tilde{v}_i - \tilde{d}_i, \label{eq9a_} \\
    \dot{\tilde{\theta}}_i &= - \Psi _i\left( {v_i, \eta _i, \tau _i} \right)^{\rm T} P_i \tilde{v}_i. \label{eq9b_}
\end{align}
We then have the following stability properties.
\begin{lemma}\label{lemma2_}
The error dynamics of parameter estimation for $i$-th vessel described by \eqref{eq9a_} and \eqref{eq9b_} is input-to-state stable if gain matrices $L_i$ and $P_i$ are chosen to be positive definite diagonal and Assumption \ref{assump2} holds.
\end{lemma}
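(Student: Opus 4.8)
The plan is to certify input-to-state stability of the interconnection \eqref{eq9a_}--\eqref{eq9b_} through a single joint quadratic Lyapunov function, with the lumped perturbation $\tilde d_i$ playing the role of the exogenous input. I would take $V_i = \tfrac12\tilde v_i^{\rm T}P_i\tilde v_i + \tfrac12\tilde\theta_i^{\rm T}\tilde\theta_i$. Since $P_i$ is positive definite, this $V_i$ is positive definite and radially unbounded in the stacked error $(\tilde v_i,\tilde\theta_i)$, with the two-sided quadratic bound $\tfrac12\min\{\lambda_{\min}(P_i),1\}\,(\|\tilde v_i\|^2+\|\tilde\theta_i\|^2)\le V_i\le\tfrac12\max\{\lambda_{\max}(P_i),1\}\,(\|\tilde v_i\|^2+\|\tilde\theta_i\|^2)$, which is exactly what an ISS--Lyapunov argument needs of the candidate.

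The heart of the proof is the derivative of $V_i$ along \eqref{eq9a_}--\eqref{eq9b_}, namely $\dot V_i = \tilde v_i^{\rm T}P_i(\Psi_i\tilde\theta_i - L_i\tilde v_i - \tilde d_i) - \tilde\theta_i^{\rm T}\Psi_i^{\rm T}P_i\tilde v_i$. The scalar $\tilde v_i^{\rm T}P_i\Psi_i\tilde\theta_i$ equals its own transpose $\tilde\theta_i^{\rm T}\Psi_i^{\rm T}P_i\tilde v_i$ because $P_i$ is symmetric, so the two regressor-dependent terms cancel exactly --- this is precisely the design rationale for placing the gain $\Psi_i^{\rm T}P_i$ in \eqref{eq8b_}, and it conveniently spares us from ever having to bound $\Psi_i$. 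What remains is $\dot V_i = -\tilde v_i^{\rm T}P_iL_i\tilde v_i - \tilde v_i^{\rm T}P_i\tilde d_i$. Since $P_i$ and $L_i$ are positive definite and \emph{diagonal} they commute, so $P_iL_i$ is positive definite with $\tilde v_i^{\rm T}P_iL_i\tilde v_i\ge\lambda_{\min}(P_iL_i)\|\tilde v_i\|^2$ (this is exactly where the diagonality hypothesis is used, since for non-diagonal $P_i,L_i$ the symmetric part of $P_iL_i$ need not be positive definite); bounding the remaining term by $\lambda_{\max}(P_i)\rho_2\|\tilde v_i\|$ via Cauchy--Schwarz and Assumption \ref{assump2}, and then completing the square with Young's inequality, yields $\dot V_i\le -\tfrac12\lambda_{\min}(P_iL_i)\|\tilde v_i\|^2 + \lambda_{\max}^2(P_i)\rho_2^2/(2\lambda_{\min}(P_iL_i))$.

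This inequality delivers the substantive conclusion: $\dot V_i$ is strictly negative whenever $\|\tilde v_i\|$ exceeds a threshold that is linear in the disturbance bound $\rho_2$, so $\tilde v_i$ is input-to-state stable with respect to $\tilde d_i$ --- uniformly ultimately bounded in a residual ball that collapses as $\rho_2\to 0$ and asymptotically vanishing when $\tilde d_i\equiv 0$ --- and the $\mathcal{KL}+\mathcal{K}$ estimate then follows from the standard ISS--Lyapunov theorem. \textbf{The main obstacle} I anticipate is that $\dot V_i$ carries no term negative in $\tilde\theta_i$: the dissipation lives entirely in the $\tilde v_i$-subspace, so promoting this bound to genuine ISS with respect to the \emph{joint} state $(\tilde v_i,\tilde\theta_i)$ is not automatic, since $\tilde\theta_i$ can drift whenever $\Psi_i\tilde\theta_i$ stays small. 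I would close this gap either by adjoining a persistence-of-excitation condition on $\Psi_i$, under which a windowed version of $\dot V_i$ becomes negative definite in both coordinates, or --- if that is more than the lemma intends --- by stating the result as ISS of the observation error $\tilde v_i$ together with Lyapunov stability of $\tilde\theta_i$ (and its convergence in the disturbance-free case), which is exactly what the computation above supports; settling this interpretive point is the one thing I would pin down before committing to the full write-up.
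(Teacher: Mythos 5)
Your proposal follows essentially the same route as the paper's proof: the identical Lyapunov function $V_{1,i}=\tfrac12\tilde v_i^{\rm T}P_i\tilde v_i+\tfrac12\tilde\theta_i^{\rm T}\tilde\theta_i$ in \eqref{eq10_}, the same exact cancellation of the regressor terms induced by the adaptation gain $\Psi_i^{\rm T}P_i$ in \eqref{eq8b_}, and the same reduction to $\dot V_{1,i}=-\tilde v_i^{\rm T}P_iL_i\tilde v_i-\tilde v_i^{\rm T}P_i\tilde d_i$ followed by Cauchy--Schwarz and Assumption \ref{assump2}. The only cosmetic difference is that you absorb the cross term with Young's inequality, whereas the paper keeps it and works on the set $\|\tilde v_i\|\ge\mu$ with $\mu=c_2\rho_2/\alpha$; the dissipation picture is the same.

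The ``main obstacle'' you flag at the end is genuine, and you should know that the paper does not resolve it either. Since \eqref{eq11_} contains no negative term in $\tilde\theta_i$, the region $\{\|\tilde v_i\|\ge\mu\}$ on which $\dot V_{1,i}$ is negative is a cylinder, not the complement of a compact set in the joint $(\tilde v_i,\tilde\theta_i)$ space; consequently the paper's ultimate bounds \eqref{eq17_}--\eqref{eq18_} retain a nondecaying dependence on $\|\tilde\theta_i(t_0)\|$, which is uniform boundedness (practical stability) rather than ISS in the $\mathcal{KL}+\mathcal{K}$ sense, and even boundedness of $\tilde\theta_i$ is delicate because $V_{1,i}$ may keep increasing while the trajectory remains inside the cylinder (the classical parameter-drift phenomenon). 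Your two proposed repairs --- a persistence-of-excitation condition on $\Psi_i$, or weakening the statement to ISS of $\tilde v_i$ together with stability of $\tilde\theta_i$ --- are exactly the standard ways to make the claim rigorous; the paper simply asserts the stronger conclusion from the same computation you carried out.
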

\begin{proof}
Propose the Lyapunov function candidate as follows
\begin{equation}\label{eq10_}
    V_{1,i} = \frac{1}{2} \tilde{v}_i^{\rm T} P_i \tilde{v}_i + \frac{1}{2} \tilde{\theta}_i^{\rm T} \tilde{\theta}_i.
\end{equation}
The time derivative of $V_{1,i}$ along the trajectories of error dynamics \eqref{eq9a_} and \eqref{eq9b_} can be obtained
\begin{align}
    \dot{V}_{1,i} &= \tilde{v}_i^{\rm T} P_i \left( {\Psi _i \tilde{\theta}_i - L_i\tilde{v}_i - \tilde{d}_i } \right) - \tilde{\theta}_i^{\rm T} \Psi _i^{\rm T} P_i \tilde{v}_i \nonumber \\
    &= \tilde{v}_i^{\rm T} P_i \Psi _i \tilde{\theta}_i - \tilde{v}_i^{\rm T} P_i  L_i\tilde{v}_i - \tilde{\theta}_i^{\rm T} \Psi _i^{\rm T} P_i  \tilde{v}_i - \tilde{v}_i^{\rm T} P_i \tilde{d}_i \nonumber \\
    &= - \tilde{v}_i^{\rm T} P_i  L_i\tilde{v}_i + \tilde{v}_i^{\rm T} \left( {P_i \Psi _i - P_i^{\rm T} \Psi _i} \right) \tilde{\theta}_i - \tilde{v}_i^{\rm T} P_i \tilde{d}_i \nonumber \\
    &= - \tilde{v}_i^{\rm T} P_i  L_i\tilde{v}_i  - \tilde{v}_i^{\rm T} P_i \tilde{d}_i .\label{eq11_}
\end{align}
Note that for simplicity the arguments of the functions are omitted so long as there is no ambiguity. Let $c_1 = \lambda_{\min}\left({P_i L_i}\right) $ and $c_2 = \lambda_{\max}\left({P_i}\right) $ , where $\lambda_{\min}\left({\cdot}\right)$ and $\lambda_{\max}\left({\cdot}\right)$ denote the minimum and maximum eigenvalues of a matrix, respectively. We then get the following inequality from the bounded disturbance condition
\begin{align}\label{eq12_}
    \dot{ V}_{1,i}  & \le  - c_{1} \left\| {\tilde{v}_i} \right\|^{2} + c_2 \rho_2 \left\|{\tilde{v}_i}\right\|  \nonumber \\
    & \le -\left({c_1 - \alpha}\right) \left\| {\tilde{v}_i} \right\|^{2}, \quad \text{whenever} \quad \left\| {\tilde{v}_i} \right\| \ge \mu .
\end{align}
Here, $\alpha$ is an arbitrary number, satisfying $ 0<\alpha < c_1$, and $\mu = \left({c_2 \rho_2}\right)/ {\alpha}$. Furthermore, we may have
\begin{equation}\label{eq13_}
    \dot{ V}_{1,i}  \le - k, \quad \forall \left\| {\tilde{v}_i} \right\| \ge \mu, \quad \forall \ t \ge t_0,
\end{equation}
where $k=\left({c_1 - \alpha}\right) {\mu}^2$. Then, taking the integral of \eqref{eq13_} yields
\begin{align}
  V_{1,i}\left({t}\right) & \le V_{1,i} \left({t_0}\right) - k\left( {t-t_0} \right), \label{eq14_}
\end{align}
and therefore,
\begin{align}
    \left\| {\tilde{v}_i}(t) \right\| &\le \sqrt {\frac{2V_{1,i} \left({t_0}\right)-2k\left( {t-t_0} \right)}{c_3}}, \label{eq15_}\\
    \left\| {\tilde{\theta}_i}(t) \right\| &\le \sqrt {{2V_{1,i} \left({t_0}\right)-2k\left( {t-t_0} \right)}}, \label{eq16_} 
\end{align}
where $c_3 = \lambda_{\min}\left({P_i}\right)$. It can be readily concluded that both $\tilde{v}_i$ and $\tilde{\theta} _i$ are uniformly ultimately bounded for all $t \ge t_0$. In particular, the ultimate bounds can be further given by
\begin{align}
    \left\| {\tilde{v}_i}(t) \right\| &\le \sqrt {\frac{c_2 \mu ^{2}}{{c_3}} + \frac{1}{c_3}\left\| {\tilde{\theta}_i}(t_0) \right\| ^{2}}, \label{eq17_}\\
    \left\| {\tilde{\theta}_i}(t) \right\| &\le \sqrt {{c_2 \mu ^{2}+ \left\| {\tilde{\theta}_i}(t_0) \right\| ^{2}}}. \label{eq18_} 
\end{align}
This completes the proof.
\end{proof}
\begin{remark}\label{remark3_}
It also follows from \eqref{eq17_} and \eqref{eq18_} that the robustness properties of developed parameter estimator are achieved; that is, under the bounded input $\tilde{d}_i$ both errors of observation and estimation can be maintained within a small neighborhood of the origin by choosing the parameters, i.e., $c_1$,  $c_2$,  $c_3$ and $\mu$, appropriately. Moreover, if we step into the bound on $\tilde{v}_i$ further, it can be seen that the bound is involved with two portions, one of which comes from the effects of disturbances and another from the parameter estimation error. Specifically, the effects from the parameter estimation can be reduced by solely increasing the estimation gain $P_i$, and thus based on the \eqref{eq9a_} this, in turn, implies that the estimation performance can be improved accordingly.
\end{remark}
\begin{remark}
Note that the terminology 'online' or 'real-time' used here lies in the fact that the parameter estimation process is nested into the feedback loop and, besides, merely the current measurement is used to perform the estimation, not relying on the history information of the state trajectories.
\end{remark}

\section{FORMATION CONTROL PROTOCOL DESIGN}\label{s4}
This section addresses the distributed learning-based control for formation tracking of a fleet of AUVs. In such a control problem, there are several pressing difficulties needed to be tackled: 1) The controls occur in a local manner, that is, solely the neighboring information of a vessel is permitted to be accessed for the control synthesis. 2) The dynamic parameters of the marine vessels are assumed to be entirely unknown and even time-varying. 3) It is necessary to consider the impacts of model mismatching, ocean disturbances, and measurement noises in the controller design so as to make the proposed scheme robust to practical scenarios.

To this end, we derive a novel distributed consensus control protocol based on the graph theory as shown in Section \ref{s2.1}. Observe that since the parameter uncertainties only appear in the vessels' dynamic model, the backstepping control technique can be used to help synthesize the controller. Then, a learning procedure as developed in Section \ref{s3} is embedded in the proposed protocol to provide real-time parameter identification. As a result, the issues of parameters unavailable and time-varying can be handled effectively. Furthermore, to improve robustness a neurodynamics-based compensator is introduced. Finally, the input-to-state stability of the resulting overall closed-loop system is proved using the Lyapunov theory.


\subsection{Distributed learning-based control with neurodynamics}
To accomplish the anticipated control objectives, we first define the consensus formation tracking error for $i$-th vessel $(i \in \Gamma)$, as follows, which is aimed to be minimized 
\begin{equation}\label{eq24_}
    e_{i} = \sum\limits_{j \in N_i} {a_{ij} \left(\eta _i- \eta _j -\Delta _{ij} \right) + b_i \left(\eta _i -\eta _i ^d \right)}, 
\end{equation}
and its time derivative is given by
\begin{equation} \label{eq25_}
    \dot e_{i} = \sum\limits_{j \in N_i} {a_{ij} \left( \dot \eta _i- \dot \eta _j \right) + b_i \left(\dot \eta _i - \dot \eta _i ^d \right)},
\end{equation}
where $a_{ij}$ is a nonnegative constant indicating the communication connections between $i$-th vessel and its neighbor $j$-th vessel ($j \in N_i$), and $b_i$ is also a nonnegative constant that indicates whether or not the $i$-th vessel is permitted to access its desired trajectories, i.e., $\eta _{i}^d$ and its time derivative $\dot{\eta} _i^{d}$; $\Delta_{ij}$ represents the relative pose (position and orientation) between vessels $i$ and $j$, which actually determines the formation shape of a fleet of vessels.

Letting
\begin{align*}
    e &= \left[ { e_{1}^{\rm T}, e_{2}^{\rm T}, \ldots, e_{N}^{\rm T} } \right]^{\rm T}, \dot{e} = \left[ { \dot{e}_{1}^{\rm T}, \dot{e}_{2}^{\rm T}, \ldots, \dot{e}_{N}^{\rm T} } \right]^{\rm T}, \\
    \eta &= \left[ { \eta_{1}^{\rm T}, \eta_{2}^{\rm T}, \ldots, \eta_{N}^{\rm T} } \right]^{\rm T}, \dot{\eta} = \left[ { \dot{\eta}_{1}^{\rm T}, \dot{\eta}_{2}^{\rm T}, \ldots, \dot{\eta}_{N}^{\rm T} } \right]^{\rm T}, \\
    \eta^{d} &= \left[ { \eta_{1}^{d \rm T}, \eta_{2}^{d \rm T}, \ldots, \eta_{N}^{d\rm T} } \right]^{\rm T} \text{and } \dot{\eta}^{d} = \left[ { \dot{\eta}_{1}^{d\rm T}, \dot{\eta}_{2}^{d\rm T}, \ldots, \dot{\eta}_{N}^{d\rm T} } \right]^{\rm T},
\end{align*} 
the time derivative of the consensus formation tracking error of entire vessel system can be expressed as the following compact form
\begin{equation}\label{eq26_}
    \dot{e} = \left( { L+B }\right) \left( { \dot{\eta} - \dot{\eta}^d } \right),
\end{equation}
where matrices $L$ and $B$ are defined previously in Section \ref{s2.1}, describing the communication topology of the formation system considered. Let $v = [v_1,v_2, \ldots, v_N]^{\rm T} $, and by means of the kinematic models of vessels \eqref{eq1a} together with \eqref{eq26_}, the error dynamics for consensus formation tracking is obtained as
\begin{equation}\label{eq27_}
    \dot{e} = \left( { L+B }\right) \left( { J v- \dot{\eta}^d } \right),
\end{equation}
where $J = \text{diag} \left\{J_1,J_2,\ldots, J_N \right\}$. To stabilize above error dynamics into the origin, we may resort to the backstepping design technique and propose the following virtual control law
\begin{equation}\label{eq28_}
    v^{d} = J^{-1} \left( -K_1 e + \dot{\eta}^d   \right),
\end{equation}
where $K_1 \in \mathbb{R}^{6N \times 6N}$ is a positive definite gain matrix to be designed. It is worthwhile noting that since merely the local information is used in this control law \eqref{eq28_}, the proposed virtual controller $v^{d}$ is regarded to be fully distributed. 

Defining an auxiliary variable as 
\begin{equation}\label{eq29__}
    z = v - v^{d},
\end{equation}
together with the proposed virtual control law \eqref{eq28_}, the error dynamics \eqref{eq27_} becomes 
\begin{equation}\label{eq29_}
    \dot{e} = - \left(L+B \right) K_1 e + \left( L+B \right) J z.
\end{equation}
From the knowledge of linear control theory, we can readily conclude that so long as the system matrix $-\left(L+B \right) K_1$ can be made Hurwitz and $z$ is uniformly bounded, all signals in system \eqref{eq29_} is uniformly ultimately bounded, and in particular if $z \to$ 0 as $t \to \infty$, then the origin of the system is a globally exponentially stable equilibrium point. This will also be demonstrated in the section of stability analysis.

To achieve the foregoing purpose, the goal now becomes that finding a control law renders the auxiliary variable $z$ invariant. In this respect, differentiating the auxiliary variable $z$, together with the dynamic models of vessels \eqref{eq7_},  the dynamics of $z$ can be obtained as
\begin{equation}\label{eq30_}
    \dot{z} = \Psi  \theta ^{\star} + \tilde  d - \dot{v}^d,
\end{equation}
where 
\begin{align*}
    \Psi &= \text{diag} \left\{ \Psi _1, \ldots, \Psi _N    \right\}, \quad \theta ^{\star} = \left[  \theta _1^{\star \rm T}, \ldots, \theta _N ^{\star \rm T}  \right]^{\rm T}, \\
    \tilde{d} &= \left[ \tilde{d}_1^{\rm T}, \ldots, \tilde{d}_N^{\rm T} \right]^{\rm T}.
\end{align*}
It should be noted that the control inputs are contained in the regression matrix $\Psi$; for the sake of conciseness, the arguments of regressor $\Psi$ are omitted.

Due to the fact that the accurate model parameters $\theta^{\star}$ are assumed to be unknown in this paper, the dynamic equation \eqref{eq30_} cannot be directly used to synthesize the formation control law. Note that while the real parameter information is unavailable, its estimation values, instead, can be utilized from the developed online learning procedure, i.e., \eqref{eq8a_} and \eqref{eq8b_}. Then, by means of the velocity observer \eqref{eq8a_} and the corresponding definition of observation error, the dynamics of ${v}$ can be equivalently expressed as
\begin{align}\label{eq31_}
    \dot v = \Psi \theta - \dot{\tilde{v}} - \bar L \tilde{v},
\end{align}
where 
\begin{align*}
    \theta &= \left[  \theta _1^{\rm T}, \ldots, \theta _N ^{\rm T}  \right]^{\rm T}, \quad \dot{\tilde{v}} = \left[ \dot{\tilde{v}}_1^{\rm T}, \ldots, \dot{\tilde{v}}_N^{\rm T} \right],\\
    {\tilde{v}} &= \left[ {\tilde{v}}_1^{\rm T}, \ldots, {\tilde{v}}_N^{\rm T} \right],\quad \bar{L} = \text{diag}\left\{ L_1,\ldots, L_N \right\}.
\end{align*}
Consequently, the dynamics of variable $z$ in \eqref{eq30_} can be modified as
\begin{equation}\label{eq32_}
    \dot{z} = \Psi \theta - \dot{\tilde{v}} - \bar L \tilde{v} - \dot{v}^d.
\end{equation}
To facilitate the control design, above expression \eqref{eq32_} is rearranged in the following form
\begin{align}\label{eq33_}
    \dot z  = &- {\bar{C}} (v,\theta) v - {\bar{D}} (v,\theta)v - {\bar{G}}(\eta,\theta) \nonumber \\
    &+ \bar{B}(\theta)\tau - \dot{\tilde{v}} - \bar L \tilde{v} - \dot{v}^d,
\end{align}
where the matrices $\bar{C}$, $\bar{D}$, $\bar{G}$ and $\bar{B}$ are all dependent on the current parameter estimates $\theta$, and $\tau = \left[\tau _1^{\rm T}, \ldots, \tau _N^{\rm T}  \right]^{\rm T}$. The objective now is to seek a control law for $\tau$ such that $z$ can be steered into an invariant set.
\begin{remark}\label{remark5_}
It is common that sliding mode control (SMC) serves as an appropriate robust control technique to realize this requirement. Considering that the severe chattering issue around the sliding mode surface may deteriorate both the control and estimation performance and even render the system unstable, we introduce a neurodynamics model, rather than the employment of sign or saturation function, in the control design so as to obviate the aforementioned drawbacks, and meanwhile we will show that the resulting bioinspired control strategy can still allow for good robust properties.
\end{remark}

As one of the most popular bioinspired neural dynamics, shunting model owing to its desirable characteristics has been extensively used to provide dynamic solutions to various robotic scenarios ranging from path planning to lower-level feedback control for single or even multiple robot systems \cite{xu2020enhanced,BSMC}. The original equation of shunting model for a neuron is given by
\begin{equation}\label{eq34_}
        {\dot \vartheta _i} =  - {a_i}{\vartheta_i} + \left( {{b_i} - {\vartheta _i}} \right)z_i^ +  - \left( {{d_i} + {\vartheta_i}} \right)z_i^ - ,
\end{equation}
where $z_i ^+$, $z_i ^- \in \mathbb{R}$ represent the environmental excitatory and inhibitory signals applied on the $i$-th neuron, respectively; $\vartheta _i \in \mathbb{R}$ represents the neural activity of $i$-th neuron; $a_i$, $b_i$ and $d_i$ are positive real constants associated.
\begin{remark}
As we can see from the above shunting equation, the variable $\vartheta_i$ exhibits a dynamic behavior to the environmental changes, i.e., $z_i ^+$ and $z_i ^-$, which means that it can be used to provide a more consistent behavior even when faced with the environmental disturbances and noises. In addition, the state of $\vartheta _i$ is bounded upper by $b_i$ and lower by $-d_i$. In what follows, it will be shown that the controller aided with shunting model is able to produce improved control activities over conventional SMC schemes.
\end{remark}

The shunting model \eqref{eq34_} is given by the scalar form, and we may extend it to a higher dimension. Let 
 \begin{align*}
     &\vartheta = \left[ \vartheta _1^{\rm T}, \ldots, \vartheta _N^{\rm T} \right] ^{\rm T}, \quad \vartheta_i \in \mathbb{R}^{6},\\
     &z = \left[ z _1^{\rm T}, \ldots, z _N^{\rm T} \right] ^{\rm T}, \quad z_i \in \mathbb{R}^{6},\\
     & \bar{g} \left( z \right) =  \left[ \bar g_1 ^{\rm T}\left( z_1 \right), \ldots, \bar g _N ^{\rm T}\left( z_N \right)    \right] ^{\rm T}, \quad \bar{g}_i \in \mathbb{R}^{6}.
 \end{align*}
 Note that the subscript $i$ here denotes the $i$-th component of a vector and $i \in \Gamma$. Then, the higher dimensional shunting model can be represented as
 \begin{equation} \label{eq35_}
     \dot \vartheta = - \Lambda  \vartheta + \bar g \left( z \right),
 \end{equation}
 where
\begin{align} \label{eq36_}
    & \Lambda = \text{diag} \left\{ a_1 \text{I}_{6\times 6} + \Xi \left( z_1 \right), \ldots, a_N \text{I}_{6\times 6} + \Xi \left( z_N \right) \right\}, \nonumber \\
    & \Xi \left( z_i \right)= \text{diag} \left\{ \left| {z_{i,1}} \right|, \ldots, \left| {z_{i,6}} \right|  \right\} , \quad i \in \Gamma, \nonumber \\
    & \bar g _i\left( z_i \right) =  \left[ g _i\left( z_{i,1} \right), \ldots, g _i\left( z_{i,6} \right) \right] ^{\rm T}, \quad i \in \Gamma, \nonumber \\
    & g_i \left( z_{i,j} \right) = \begin{cases}
    b_i z_{i,j}, & z_{i,j} \ge 0,   \\
    d_i z_{i,j}, & z_{i,j} < 0.
    \end{cases} 
\end{align}
Here, the adjustable parameters $a_i$, $b_i$, and $d_i \ (i \in \Gamma)$ are the positive constants associated with the  model.

Integrated with \eqref{eq35_} and \eqref{eq36_}, the following bioinspired control law is proposed to stabilize the $z$-subsystem  $\eqref{eq33_}$
\begin{align} \label{eq37_}
    \tau &= \bar B ^{-1} \left[ \dot{v}^d + \bar{C} v + \bar{D} v +\bar{G} -K_2 \vartheta   \right],
\end{align}
where  $K_2 \in \mathbb{R}^{6N \times 6N}$ is a positive define gain matrix to be designed. The realization of the proposed distributed formation control protocol is illustrated in the \textbf{Algorithm} $\ref{alg:2}$.

\begin{figure}[!t]
\label{alg:1}
\renewcommand{\algorithmicrequire}{\textbf{Input:}}
\renewcommand{\algorithmicensure}{\textbf{Output:}}
\removelatexerror
	\begin{algorithm}[H]
		\caption{Distributed Bioinspired Robust Learning-Based Formation Control Algorithm.}
		\label{alg:2}
		\begin{algorithmic}[1]
			\STATE {For each AUV $i$, $i=1, \ldots, N$:}
			\STATE {Initialize the controller: choose suitable values for $L_i$, $P_i$, $K_{1,i}$, $K_{2,i}$, $a_i$, $b_i$ and $c_i$; set the initial states appropriately both for the learning procedure and shunting model.}
			\STATE {\textbf{while} The formation objective is not complete \textbf{do}}
			\STATE {\quad Sample the system states $\eta _i$ and $v_i$;}
			\STATE {\quad Calculate current estimates for $\hat v_i$ and $\theta _i$ using the \\ \quad adaptation law given in \eqref{eq8a_} and \eqref{eq8b_};}
			\STATE {\quad Receive the neighbors' information $\eta _j$ and desired \\ \quad  trajectories $\eta ^d$ and $\dot \eta ^d$ as applicable; }
			\STATE {\quad Apply the current control input $\tau _i$ calculated by the \\ \quad control law presented in \eqref{eq35_}--\eqref{eq37_}. } 
			\STATE {\textbf{end while}}
 
		\end{algorithmic}
	\end{algorithm}
\end{figure}

\begin{remark}\label{remark_6}
It can be observed that the proposed controller is implemented in a fully distributed way, and besides the control law is consisted of two portions, i.e., the learning-based equivalence control and bioinspired control. To be specific, in order to counteract the nonlinearities and uncertainties in the vessels' dynamic model, the learning-based equivalence control is designed on the top of the parameter estimators, i.e., \eqref{eq8a_} and \eqref{eq8b_}, where the system matrices $\bar{B}$, $\bar{C}$, $\bar{D}$ and $\bar{G}$ are updated in a real-time fashion. Moreover, the bioinspired control term is synthesized with the aim to provide a smooth and practical control effort and, at the same time, stabilize the subsystem of $z$ even in the presence of estimation errors.
\end{remark}

\subsection{Stability analysis}
The input-to-state stability of the proposed learning-based bioinspired control scheme is proven in this section. To this end, plugging the proposed control law \eqref{eq37_} into the equation \eqref{eq33_} together with \eqref{eq29_}, we obtain the following closed-loop system
\begin{align} 
    \dot{e} &= - \left(L+B \right) K_1 e + \left(L+B \right)Jz,  \label{eq38_} \\
    \dot z &= - K_2 \vartheta - \dot{\tilde{v}} - \bar{L}\tilde{v}, \label{eq39_} \\
    \dot \vartheta &= - \Lambda  \vartheta + \bar g \left( z \right).  \label{eq40_}
\end{align}
Notice the fact that the resulting closed-loop system is made up of three subsystems \eqref{eq38_}--\eqref{eq40_}; in particular, $e$-subsystem is cascaded with the $z$-subsystem by viewing $z$ as the input, and subsystems of $z$ and $\vartheta$ are interconnected. To facilitate the analysis, letting $\xi = \left[ z^{\rm T}, \vartheta^{\rm T} \right]^{\rm T}$ and $\delta =  - \dot{\tilde{v}} - \bar{L}\tilde{v}$, the subsystems \eqref{eq39_} and \eqref{eq40_} can be rewritten in a more compact form as
\begin{align} 
    \dot \xi &= T \xi + N\delta, \label{eq41_}
\end{align}
where
\begin{align*}
    & T =  \begin{bmatrix} 0 & -K_2  \\
    \bar{G} & -\Lambda  \end{bmatrix}, 
    \quad N = \begin{bmatrix} \textbf{1} \\
    \textbf{0} \end{bmatrix}.
\end{align*}
Here, it follows from the property of function $\bar{g}(z)$ that the matrix $\bar{G}$ is diagonal and each entry in its diagonal takes value of either $b_i$ or $d_i \ (i \in \Gamma)$, both of which are positive constants. As a result, $\bar{G}$ is a positive definite diagonal matrix.

We provide the following theorem to establish the input-to-state stability of the resulting closed-loop system with the proposed distributed learning-based bioinspired formation control protocol \eqref{eq35_}, \eqref{eq37_}.
\newtheorem{theorem}{Theorem} 
\begin{theorem} \label{theorem1}
The system \eqref{eq38_}--\eqref{eq40_} is input-to-state stable if matrices $K_1$, $K_2$, $\Lambda$ and $\bar{G}$ are chosen properly such that the matrices $-\left(L+B\right) K_1$ and $T$ are both Hurwitz.
\end{theorem}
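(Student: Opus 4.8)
The plan is to exploit the cascade-plus-interconnection structure pointed out just before the statement: the $e$-subsystem \eqref{eq38_} is a linear system driven by $z$, whereas $z$ and $\vartheta$ form the interconnected block \eqref{eq41_}, $\dot\xi = T\xi + N\delta$, in which $\delta = -\dot{\tilde v} - \bar L\tilde v$ acts as an exogenous signal supplied by the estimator. I would therefore proceed in four steps: (i) prove that the $\xi$-block is ISS with respect to $\delta$; (ii) prove that the $e$-subsystem is ISS with respect to $z$; (iii) compose (i) and (ii) by a standard cascade argument to obtain ISS of $(e,\xi)$ with respect to $\delta$; and (iv) bound $\delta$ using Lemma \ref{lemma2_} and Assumption \ref{assump2}, so that the closed loop is ISS (in particular uniformly ultimately bounded, with an arbitrarily small ultimate bound under the gain choices of Remark \ref{remark3_}) and collapses to asymptotic stability of the origin when the lumped disturbance vanishes.

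For step (i) I would construct a quadratic Lyapunov function for \eqref{eq41_}. The care needed here is that $T$ is not constant: $\Lambda = \text{diag}\{a_i\text{I}\} + \Xi(z)$ with $\Xi(z)$ diagonal, positive semidefinite and vanishing at $z = 0$, while $\bar G$ is a positive definite diagonal matrix with entries in the finite set $\{b_i, d_i\}$. Because the coupling $\dot z = -K_2\vartheta + \delta$, $\dot\vartheta = \bar G z - \Lambda\vartheta$ is of double-integrator type, a plain $\xi^{\rm T}P\xi$ with block-diagonal $P$ gives only a negative \emph{semidefinite} derivative, so I would add a cross term, e.g. $V_\xi = \vartheta^{\rm T}\vartheta + z^{\rm T}\bar G K_2^{-1} z - 2\epsilon\, z^{\rm T}\vartheta$ with $\epsilon > 0$ small: the base terms cancel the symmetric cross terms and leave $-2\vartheta^{\rm T}\Lambda\vartheta$, and the $\epsilon$-term introduces a genuine $-\|z\|^2$ contribution (through $-2\epsilon z^{\rm T}\bar G z$). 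The sign-indefinite remainder produced by the $\epsilon$-term, above all the part involving $z^{\rm T}\Xi(z)\vartheta$, is precisely what the extra damping $-\vartheta^{\rm T}\Xi(z)\vartheta$ furnished by $\Lambda(z)$ is meant to absorb; this yields $\dot V_\xi \le -c_\xi\|\xi\|^2 + c_\delta\|\xi\|\,\|\delta\|$ on bounded sets, hence the ISS estimate and a bound on $\sup_t\|z(t)\|$ proportional to $\sup_t\|\delta(t)\|$.

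For step (ii), since $-(L+B)K_1$ is Hurwitz by hypothesis and $L+B$ is positive definite by Lemma \ref{lemma1}, I would solve the corresponding Lyapunov equation for a positive definite $P_e$, take $V_e = e^{\rm T}P_e e$, and get $\dot V_e \le -c_e\|e\|^2 + 2\|P_e(L+B)J\|\,\|e\|\,\|z\|$, using boundedness of the kinematic Jacobian $J$ along trajectories kept clear of representation singularities; this is ISS of $e$ with respect to $z$. For step (iii) I would either invoke the cascade-ISS lemma or form the composite $V = V_e + \kappa V_\xi$ with $\kappa$ large enough to dominate the $\|e\|\,\|z\|$ cross term by Young's inequality, concluding that $(e,\xi)$ is ISS with respect to $\delta$.

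For step (iv), $\delta = -\dot{\tilde v} - \bar L\tilde v$, and from \eqref{eq9a_} one has $\dot{\tilde v} = \Psi\tilde\theta - \bar L\tilde v - \tilde d$; Lemma \ref{lemma2_} makes $\tilde v$ and $\tilde\theta$ uniformly ultimately bounded, Assumption \ref{assump2} makes $\tilde d$ bounded, so $\delta$ is bounded whenever the regressor $\Psi$ stays bounded along the solution, and the chain of ISS gains then delivers the theorem. I expect the main obstacle to be exactly this last provision, in tandem with the non-constant $T$: boundedness of $\Psi$ needs a priori boundedness of $v$, $\eta$ and $\tau$, which through \eqref{eq37_} and \eqref{eq28_} feed back on $e$, $z$, $\vartheta$, $\theta$ and on $\dot v^{d}$, so the interconnection must be closed by a bootstrapping argument on the maximal interval of existence (or by an explicit small-gain condition on the estimator and controller gains) rather than by the purely linear cascade sketched above; and verifying that the state-dependence of $\Lambda(z)$ does not spoil the Lyapunov decrease — i.e. that the extra damping indeed dominates the perturbation-induced cross terms — is the technical heart of step (i).
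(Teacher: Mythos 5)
Your proposal follows the same overall route as the paper: the identical cascade decomposition into the $e$-subsystem driven by $z$ and the $\xi=(z,\vartheta)$ block driven by $\delta=-\dot{\tilde v}-\bar L\tilde v$, ISS of each piece, composition, and a bound on $\delta$ from Lemma \ref{lemma2_}. For the $e$-subsystem the paper writes the explicit variation-of-constants solution and uses $\|e^{-(L+B)K_1 t}\|\le k_1 e^{-\alpha_1 t}$ rather than a Lyapunov equation, but that is cosmetically different from your $V_e=e^{\rm T}P_e e$. The substantive divergence is in the $\xi$-block: the paper treats $T$ as a \emph{constant} Hurwitz matrix, solves $T^{\rm T}P+PT=-I$ for a fixed $P$, and takes $V_2=\xi^{\rm T}P\xi$, obtaining $\dot V_2\le -\|\xi\|^2+2\|P\|\,\|\delta\|\,\|\xi\|$ directly. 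You instead observe that $T$ depends on the state through $\Lambda=\mathrm{diag}\{a_i I+\Xi(z_i)\}$ and build a cross-term Lyapunov function to recover a negative definite derivative. Your observation is correct and identifies a real looseness in the paper's argument: a single constant $P$ solving the Lyapunov equation exists only for a fixed $T$, and the paper does not justify why the state-dependent part $\Xi(z)$ (which is positive semidefinite and only adds damping) does not destroy the decrease of $\xi^{\rm T}P\xi$. Your construction, which uses that extra damping to absorb the perturbation-induced cross terms, is the more defensible version of the same step.

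You also flag a second gap the paper passes over silently: bounding $\delta$ requires bounding $\dot{\tilde v}=\Psi\tilde\theta-\bar L\tilde v-\tilde d$, hence boundedness of the regressor $\Psi$, which depends on $v$, $\eta$, and $\tau$ and therefore feeds back on the very signals whose boundedness is being established. The paper simply asserts $\|\delta\|\le\gamma$ "as the result of Lemma \ref{lemma2_}," which controls $\tilde v$ and $\tilde\theta$ but not $\Psi$. Your proposed bootstrapping (or small-gain) closure on the maximal interval of existence is the missing ingredient needed to make that assertion rigorous. In short, your proof is not a different proof of the theorem so much as a repaired version of the paper's; the repairs are genuine and worth carrying out.
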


\begin{proof}
 Utilizing the cascaded interconnection of subsystems \eqref{eq38_} and \eqref{eq41_}, the proof may proceed with two steps: first step shows the input-to-state stability of the $e$-subsystem with respect to $z$, and the second step tries to show that the $\xi$-subsystem is input-to-state stable as well regarding the $\delta$.

\textit{Step1: Input-to-state stability of $e$-subsystem.}

Let $\bar z = \left(L+B\right) J z$. It is observed that the $e$-subsystem \eqref{eq38_} is a linear-time-invariant (LTI) system enforced by the input $\bar z$, and according to the condition that $-(L+B)K_1$ is designed to be Hurwitz, the solution of such a LTI system can be readily given by
\begin{align}\label{eq42_}
     e\left( t \right) = {e^{ - \left( {L + B} \right)K_1 t}}{ e}\left( 0 \right) + \int_0^t {{e^{ - \left( {L + B} \right)K_1\left( {t - \tau } \right)}} \bar{z}(\tau)d\tau } .
\end{align}
Applying the inequality $\left\| e^{-(L+B)K_1 t}  \right\| \le k_1 e^{-\alpha _1 t}   $, where $k_1$ and $\alpha _1$ are some positive constants, yield
\begin{align} \label{eq43_}
    \left\| { e}\left( t \right) \right\| & \le k_1 e^{-\alpha _1 t} \left\|  e\left( 0 \right) \right\|  + \int_0^t {{k_1 e^{ - \alpha _1 \left( {t - \tau } \right)}} \bar{z}(\tau)d\tau } \nonumber \\
    & \le  k_1 e^{-\alpha _1 t} \left\| { e}\left( 0 \right) \right\|  + \frac{k_1}{\alpha _1} \sup_{0\le \tau \le t} \left\|  \bar{z}(\tau) \right\| \nonumber \\
    & =  k_1 e^{-\alpha _1 t} \left\| { e}\left( 0 \right) \right\|  + \frac{k_1 k_2}{\alpha _1} \sup_{0\le \tau \le t} \left\|  {z}(\tau) \right\|,    
\end{align}
where $k_2$ is the maximal eigenvalue of matrix $L+B$. It shows from inequality \eqref{eq43_} that the trajectories of subsystem \eqref{eq38_} is bounded whenever the signal $z(t)$ is bounded. This also demonstrates that above subsystem is of input-to-state stability with respect to $z(t)$.

\textit{Step2: Input-to-state stability of $\xi$-subsystem.}

Since matrix $T$ is Hurwitz, then there exists a symmetric positive define matrix $P$ such that 
\begin{align}
    T^{\rm T} P + P T = -{I}, \label{eq44_}
\end{align}
where ${I}$ is the identity matrix.

Propose the following Lyapunov function candidate
\begin{align}
    V_2 = \xi ^{\rm T} P \xi. \label{eq45_}
\end{align}
Taking the time derivative of $V_2$ along the trajectories of $\xi$-subsystem, yield
\begin{align}
    \dot{V_2} &= \dot{\xi}^{\rm T} P \xi + \xi ^{\rm} P \dot{\xi} \nonumber\\
    &= \left( T\xi + N \delta \right)^{\rm T} P \xi + \xi ^{\rm T} P \left( T\xi + N \delta \right) \nonumber \\
    &= -\xi^{\rm T} \xi + \left( N\delta \right)^{\rm T} P \xi + \xi ^{\rm T} P N \delta \nonumber \\
    & \le - \left\| \xi \right\| ^2 +2 \left\| N \right\| \left\| \delta \right\| \left\| P \right\| \left\| \xi \right\| \nonumber \\
    & \le - \left\| \xi \right\| ^2 +2\left\| P \right\| \left\| \delta \right\|  \left\| \xi \right\|. \label{eq46_}
\end{align}
As the result of Lemma \ref{lemma2_}, there exists a positive number $\gamma$ such that the following inequality holds
\begin{align}
    \left\| \delta \right\| = \left\| - \dot{\tilde{v}} - \bar{L}\tilde{v} \right\| \le \left\| \dot{\tilde{v}}\right\| + \left\| \bar{L}\right\| \left\| \tilde{v} \right\| \le \gamma. \label{eq47_}
\end{align}
Thus, we may have
\begin{align}
    \dot{V}_2 &\le - \left\| \xi \right\| ^2 + 2 \gamma \left\| P \right\| \left\| \xi \right\| \nonumber \\
    & \le - \left( 1 -\kappa \right) \left\| \xi \right\| ^2, \ \text{whenever } \left\| \xi \right\| \ge \frac{2\gamma }{\kappa}\left\| P\right\|, \label{eq48_}
\end{align}
where $0 < \kappa < 1$. Letting $\beta = \left( 2\gamma \left\| P \right\| \right) / \kappa$, from \eqref{eq48_} together with \eqref{eq45_} we may obtain
\begin{align}
    V_2 \le \lambda_{\max} \left( P \right) \beta^2, \label{eq49}
\end{align}
and  furthermore,
\begin{align}
    \left\| \xi \right\| \le \sqrt{\frac{\lambda_{\max} \left( P \right)}{\lambda_{\min} \left( P \right)}} \beta, \label{eq50_}
\end{align}
where $\lambda_{\max} \left( \cdot\right)$ and $\lambda_{\min} \left( \cdot\right)$ denote the maximum and minimum eigenvalues of a matrix, respectively. The ultimate bound of $\xi$ is given by \eqref{eq50_}, which shows that the $\xi$-subsystem is input-to-state stable. Together with the input-to-state stability property of $e$-subsystem obtained from the \textit{Step1}, we can conclude that the closed-loop system \eqref{eq38_}--\eqref{eq40_} is input-to-state stable.  This completes the proof.\end{proof}

\begin{remark}
Note that it is easy to verify that $-(L+B)K_1$ is Hurwitz if $K_1$ is positive diagonal, which is attributed to the fact that $L+B$ is positive definite. For the Hurwitz property of $T$, observe that $K_2$, $\bar{G}$ and $\Lambda$ are all diagonal matrices, and thus the system \eqref{eq41_} represents a batch of mutually independent 2nd-order subsystems. Hence, the Hurwitz property can be established by letting all the eigenvalues of such subsystems have negative real parts, and in particular the analytical solutions of eigenvalues of a 2nd-order system can be easily obtained.
\end{remark}


\begin{remark} \label{remark7_}
It can be shown from Theorem \ref{theorem1} that the proposed formation control system is of good robustness in rejecting various unknown disturbances. To be more specific, the unavoidable modeling uncertainties first are addressed actively by the online learning procedure where the system dynamic parameters are identified in a real-time manner. Subsequently, the effects of the estimation error remained can be further counteracted by the proposed robust controller, in which the high-gain strategy is circumvented and the resulting control activities are much smoother when compared to the SMC-based approaches.
\end{remark}

\section{SIMULATION RESULTS}\label{s5}
\begin{figure}[!t]
\centering
\includegraphics[width=0.3\textwidth,trim=10 30 10 10,clip]{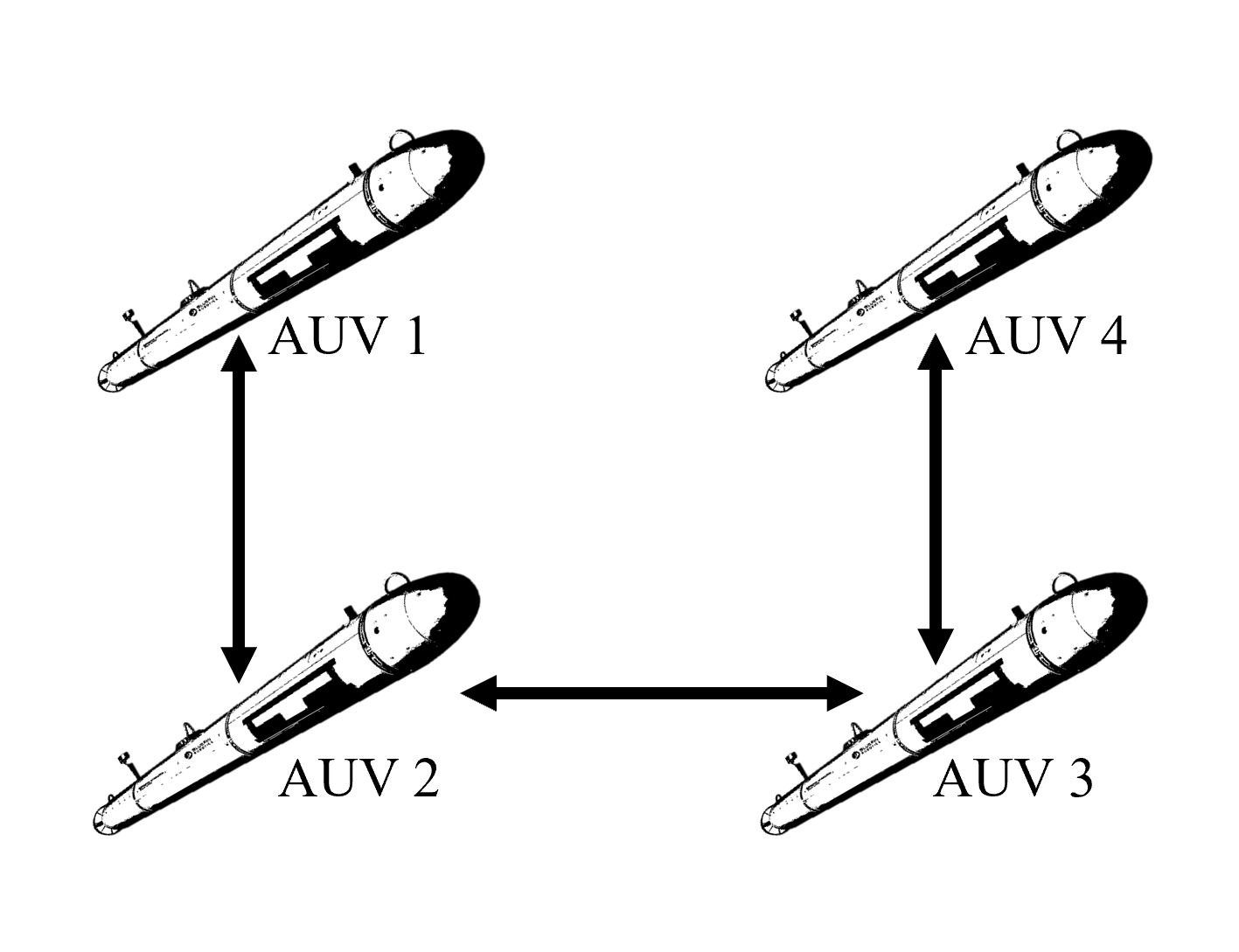}
\caption{The communication topology graph for the consensus formation tracking of 4 AUVs.}
\label{fig_2}
\end{figure}

\begin{figure*}[!htbp] 
	\centering
	\subfloat[]{\includegraphics[width=0.3\textwidth]{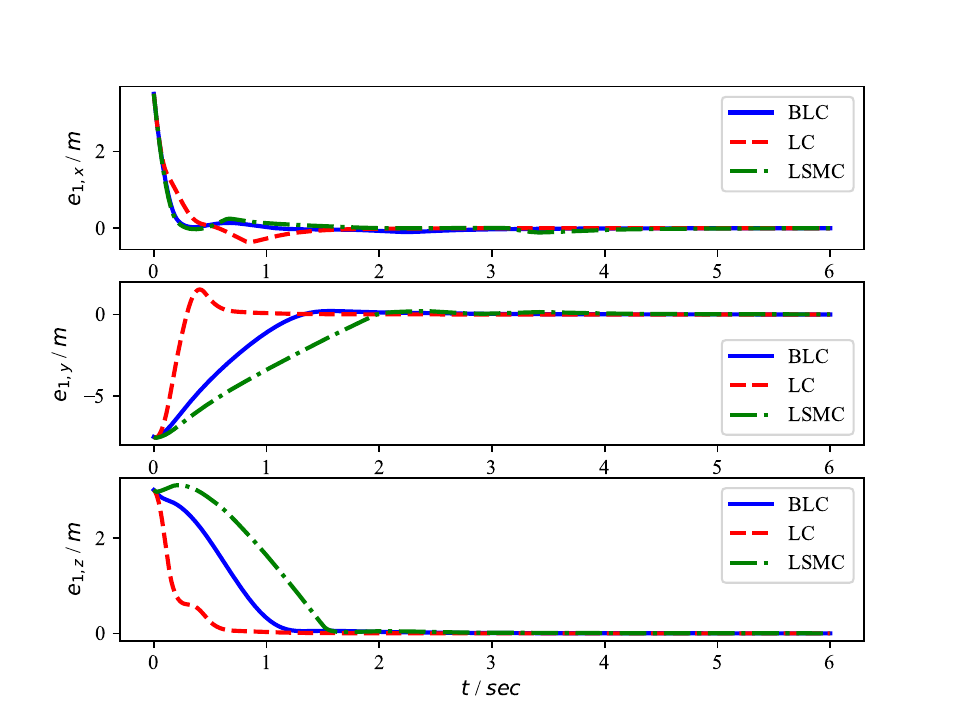}}\hspace{0.0in} 
	\subfloat[]{\includegraphics[width=0.3\textwidth]{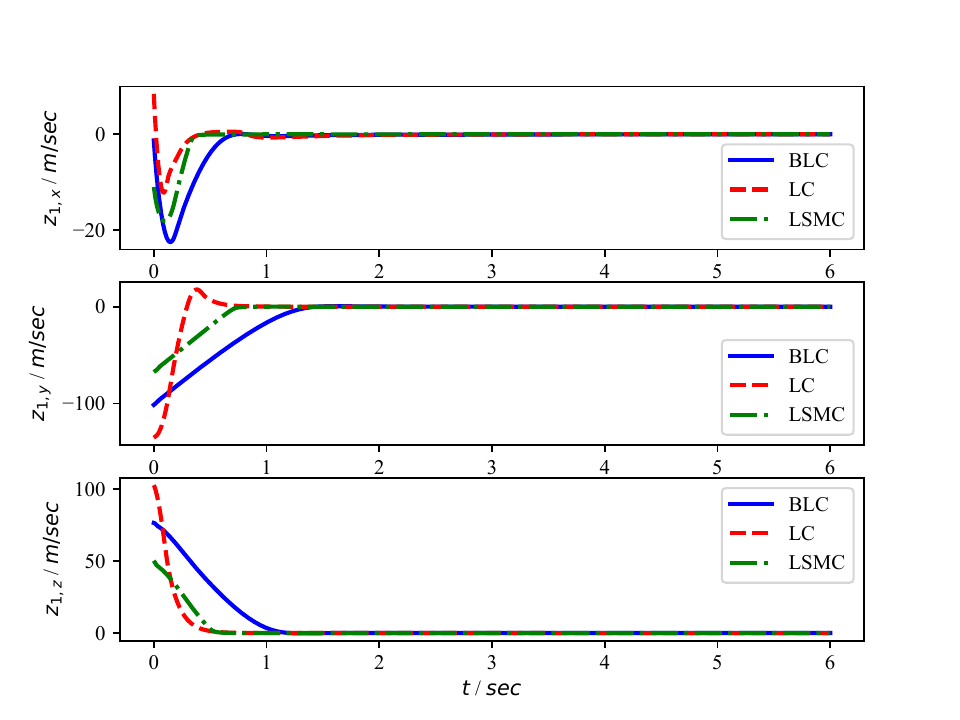}}
	\subfloat[]{\includegraphics[width=0.3\textwidth]{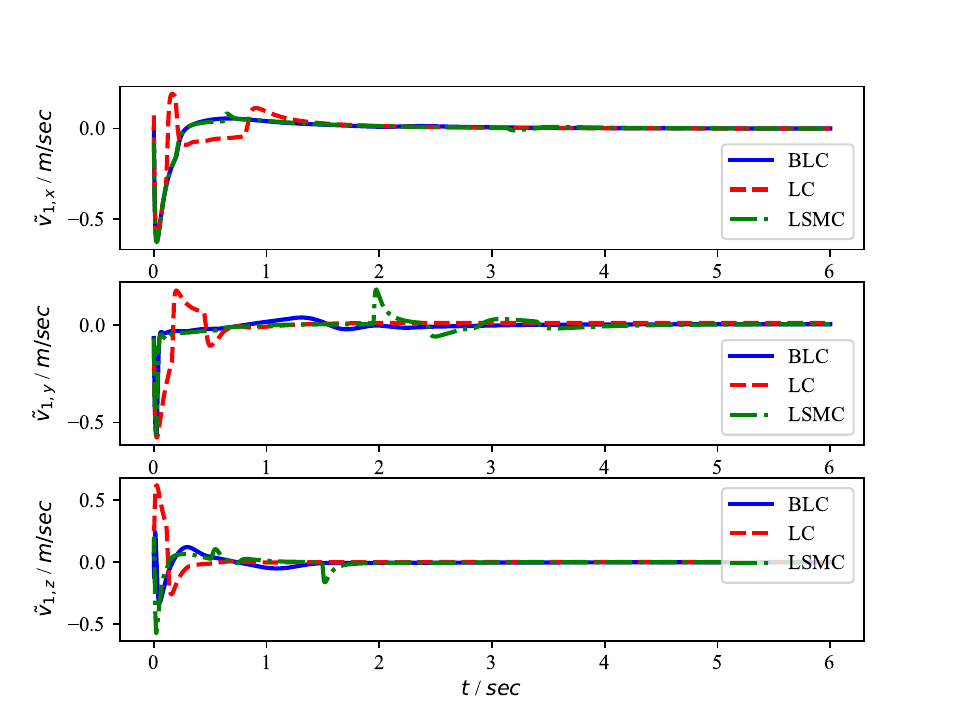}}
	\caption{The adaptive formation tracking performance of AUV 1 under three types of controllers. (a) The consensus formation tracking error. (b) Trajectory of auxiliary variable $z_1$. (c) The observation error.}
	\label{fig_3}
\end{figure*} 

\begin{figure*}[!htbp] 
	\centering
	\subfloat[]{\includegraphics[width=0.3\textwidth]{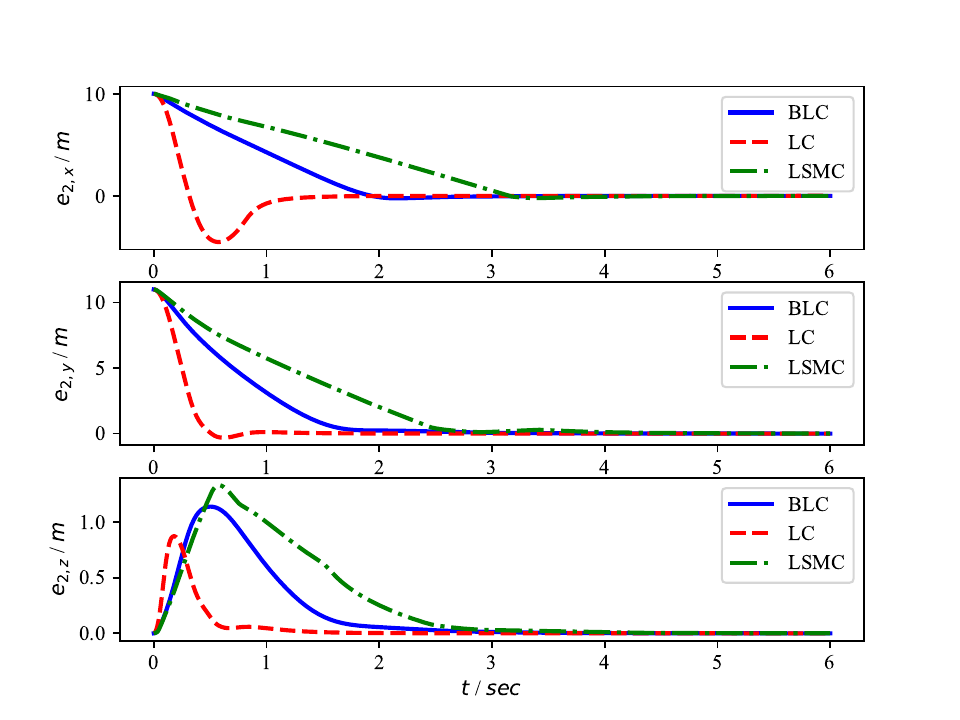}}\hspace{0.0in} 
	\subfloat[]{\includegraphics[width=0.3\textwidth]{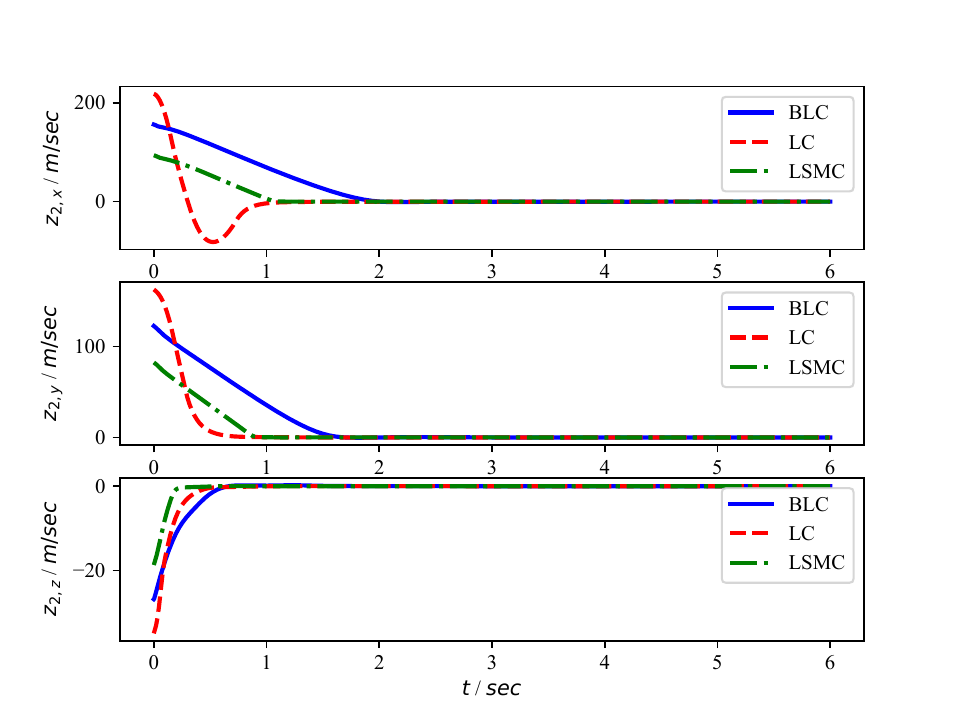}}
	\subfloat[]{\includegraphics[width=0.3\textwidth]{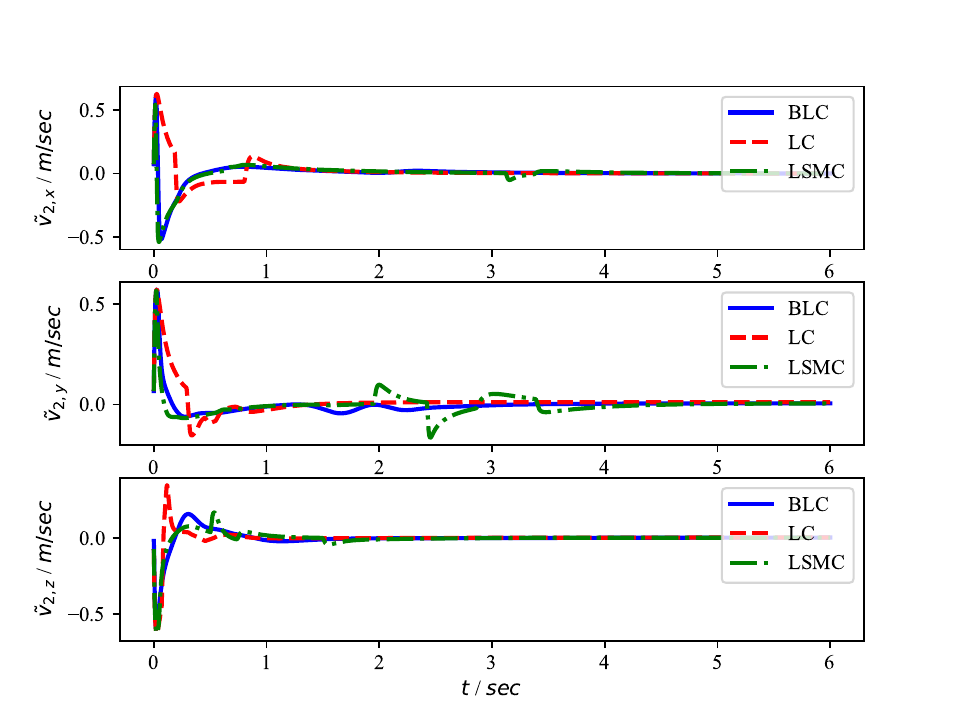}}
	\caption{The adaptive formation tracking performance of AUV 2 under three types of controllers. (a) The consensus formation tracking error. (b) Trajectory of auxiliary variable $z_2$. (c) The observation error.}
	\label{fig_4}
\end{figure*} 

\begin{figure*}[!htbp] 
	\centering
	\subfloat[]{\includegraphics[width=0.3\textwidth]{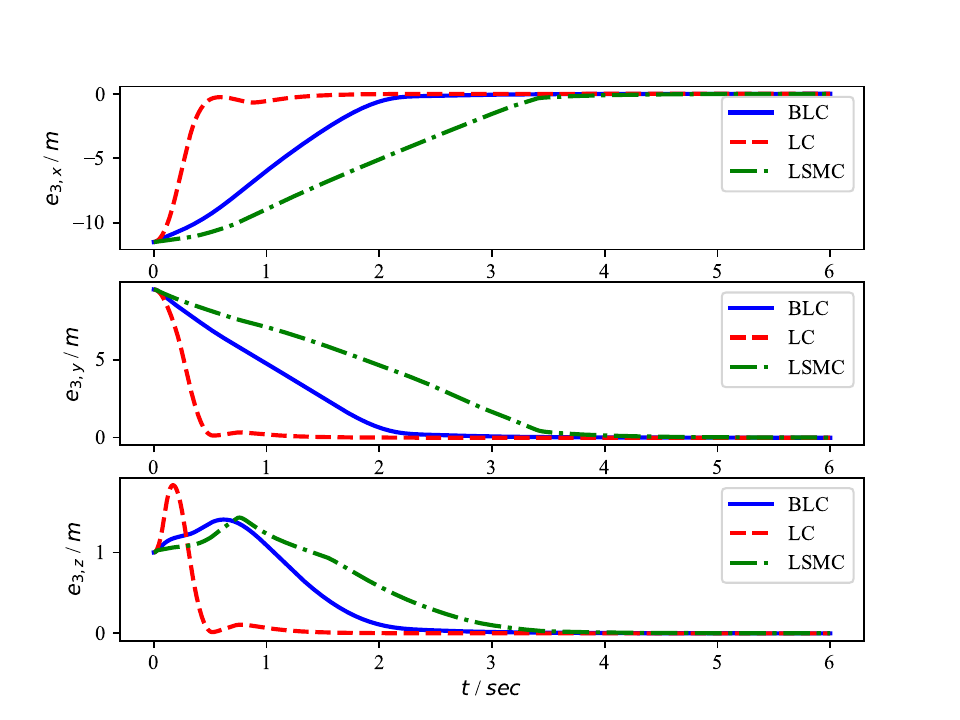}}\hspace{0.0in} 
	\subfloat[]{\includegraphics[width=0.3\textwidth]{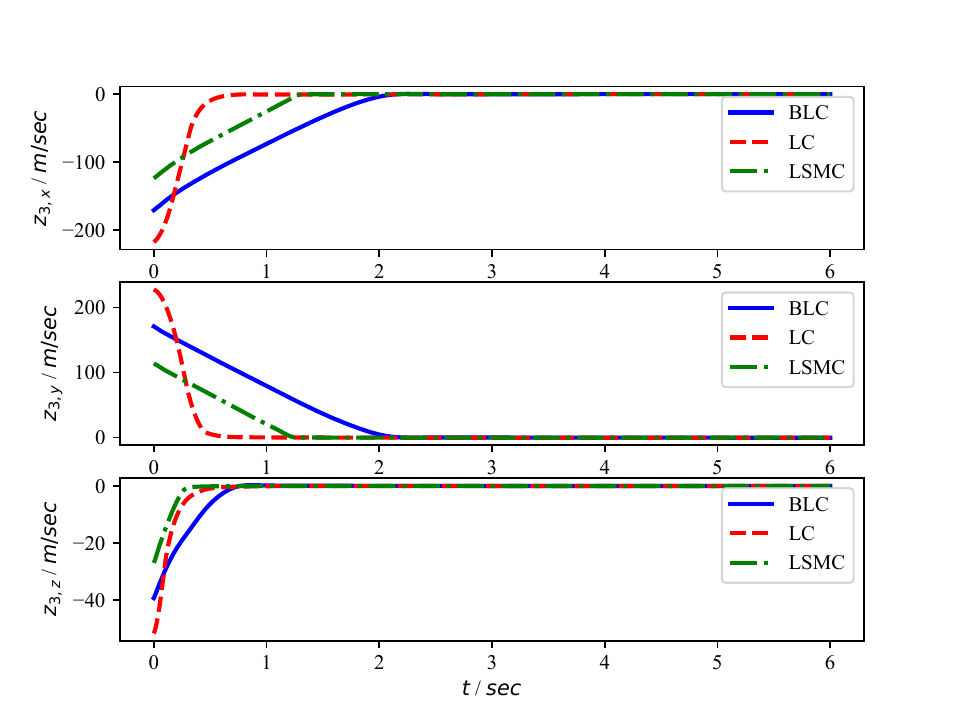}}
	\subfloat[]{\includegraphics[width=0.3\textwidth]{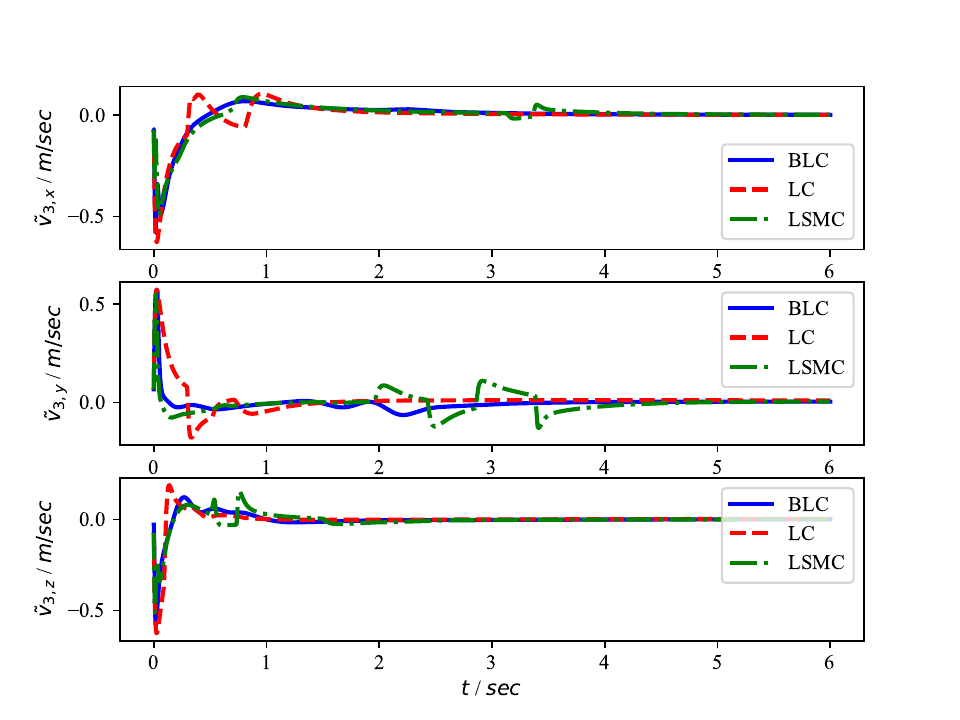}}
	\caption{The adaptive formation tracking performance of AUV 3 under three types of controllers. (a) The consensus formation tracking error. (b) Trajectory of auxiliary variable $z_3$. (c) The observation error.}
	\label{fig_5}
\end{figure*} 

\begin{figure*}[!htbp] 
	\centering
	\subfloat[]{\includegraphics[width=0.3\textwidth]{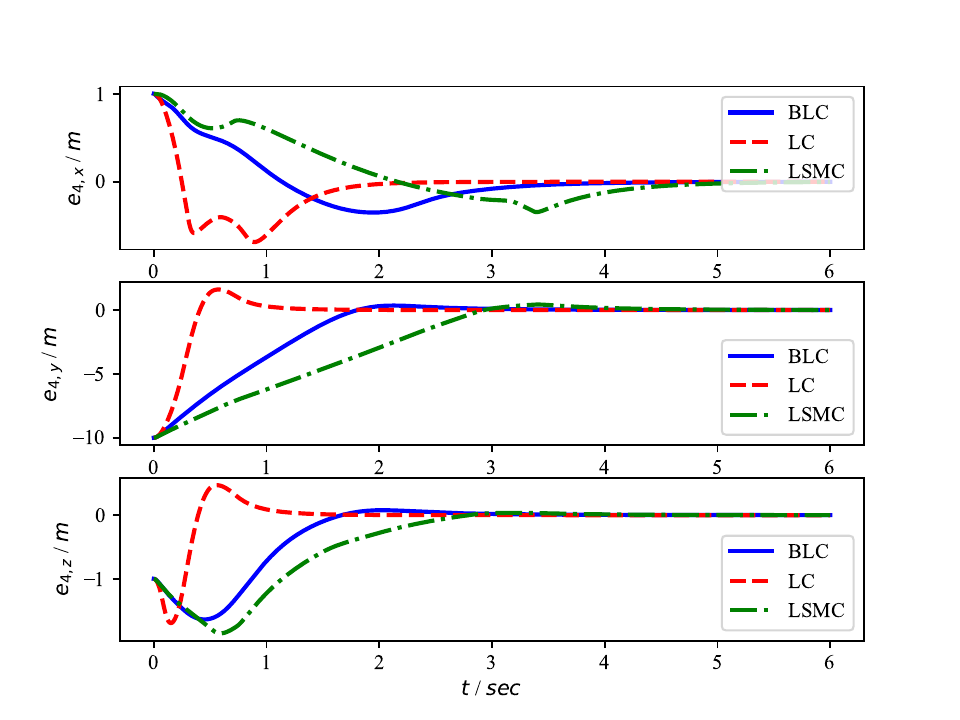}}\hspace{0.0in} 
	\subfloat[]{\includegraphics[width=0.3\textwidth]{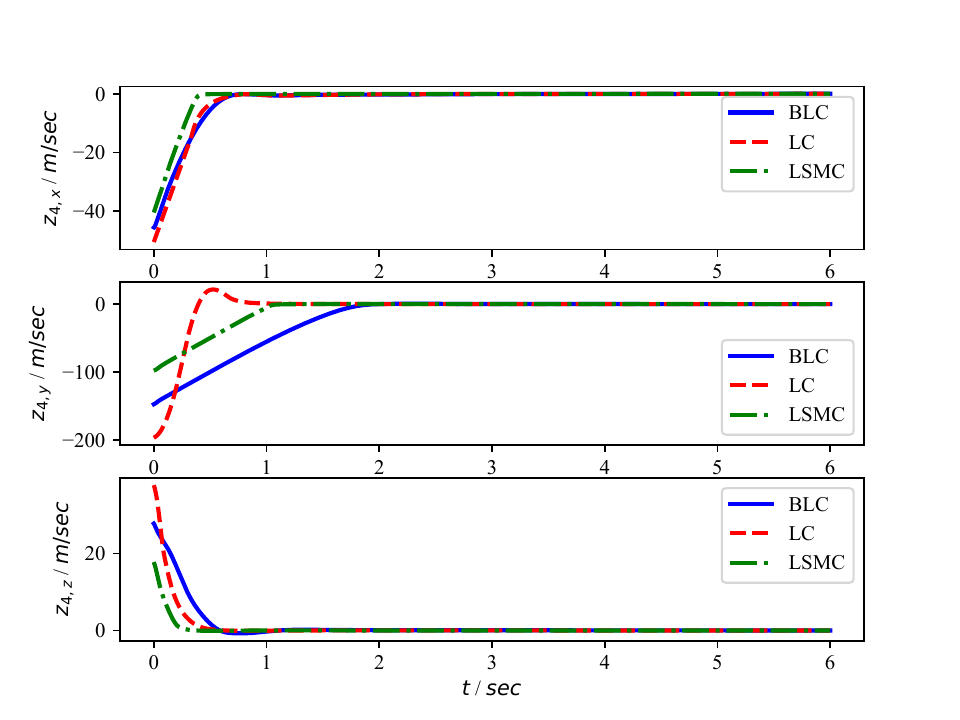}}
	\subfloat[]{\includegraphics[width=0.3\textwidth]{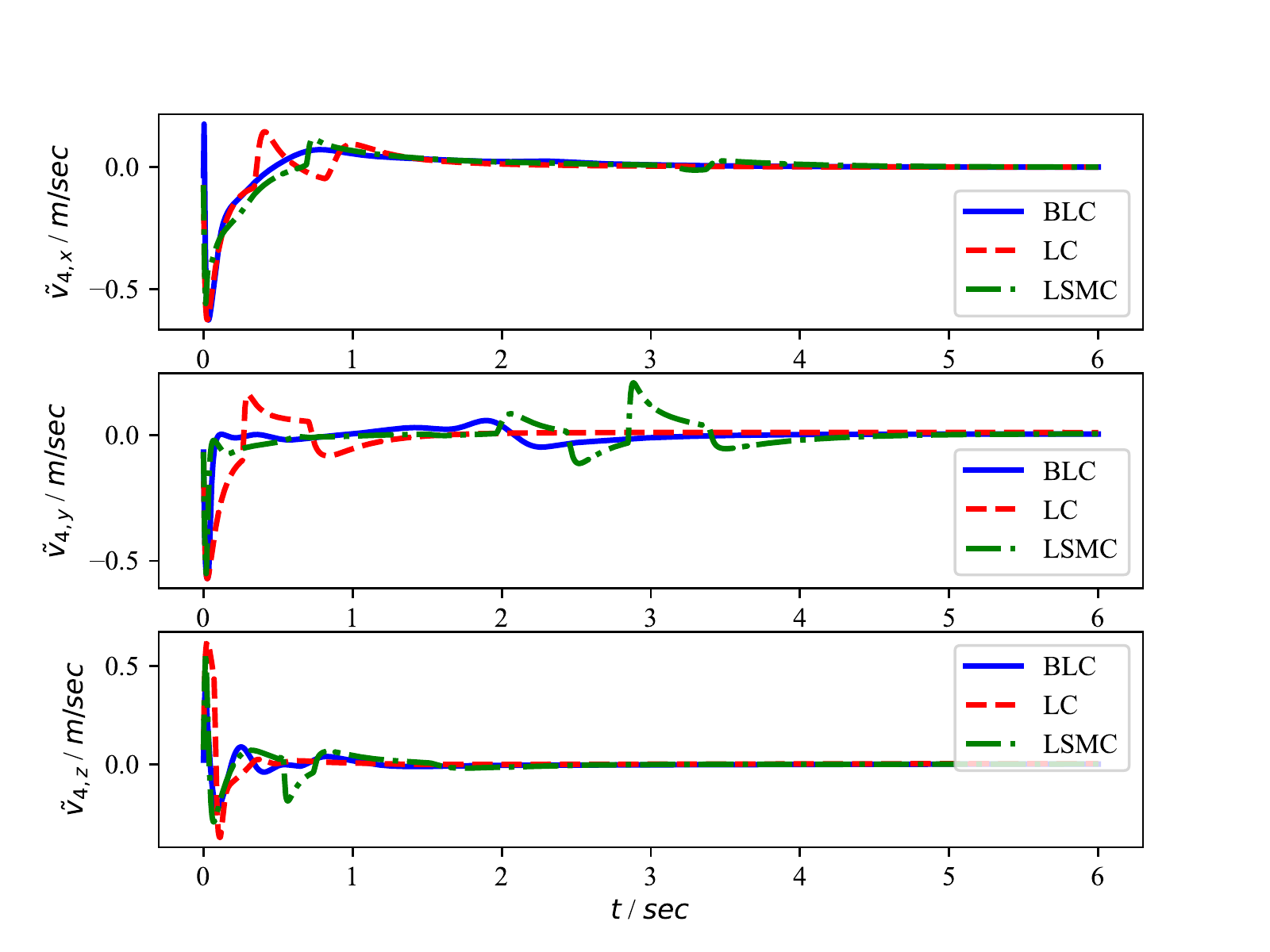}}
	\caption{The adaptive formation tracking performance of AUV 4 under three types of controllers. (a) The consensus formation tracking error. (b) Trajectory of auxiliary variable $z_4$. (c) The observation error.}
	\label{fig_6}
\end{figure*} 
To validate the efficiency and superiority of the proposed distributed formation tracking protocol, numerous simulation experiments are conducted and compared in this section, where two types of commonly used nonlinear controllers, i.e, backstepping control and sliding mode control, are adopted as the baselines to illustrate the formation performances in three different scenarios in terms of the formation tracking accuracy, disturbance rejection, and the noise suppression. In all of the simulation cases, four underwater vessels are used to construct a formation system, and each vessel is steered by its own embedded formation controller whose objectives are to form a prescribed formation shape, i.e., a quadrilateral geometry profile, and meanwhile follow a desired straight line trajectory in 3-dimensional space. The communication topology among the vessels of the considered formation system is illustrated in Fig. \ref{fig_2}.

The dynamics of vessels employed is described by the equations \eqref{eq1a} and \eqref{eq1b}, and the system parameters associated with the dynamic equations are given as follows with international units: $m_i = 25$, $I_{x,i}=25$, $I_{y,i}=20$, $I_{z,i}=30$, $\beta _{vx,i} = -10$, $\beta _{vy,i} = -8$, $\beta _{vz,i} = -12$, $\beta _{\dot vx,i} = -8$, $\beta _{\dot vy,i} = -6$, $\beta _{\dot vz,i} = -8$, $\beta _{\omega x,i} = -0.35$, $\beta _{\omega y,i} = -0.2$, $\beta _{\omega z,i} = -0.25$, $\beta _{\dot \omega x,i} = -25$, $\beta _{\dot \omega y,i} = -35$, $\beta _{\dot \omega z,i} = -30$, $\left( {i \in \left\{ {1,2,3,4} \right\}}  \right)$. Note that these parameters are just used to simulate the dynamic process of the vessels, and are unavailable for the controller design. In other words, all of the distributed formation controllers used in the simulations are additionally equipped with a learning procedure (developed in Section \ref{s3}) to provide a real-time parameter estimation. The weights on the communication topological graph are set as ${a_{12}} = {a_{21}} = {a_{23}} = {a_{23}} = {a_{34}} = {a_{43}} = 1$, and since it is assumed that all of the vessels are allowed to access the information of the desired trajectory, we set ${b_1} = {b_2} = {b_3} = {b_4} = 1$. To generate a prescribed formation profile, the corresponding relative positions between vessels are determined as  ${\delta _{12}} = {\left[ {0,10,0} \right]^{\rm T}}$, ${\delta _{21}} = {\left[ {0, - 10,0} \right]^{\rm T}}$, ${\delta _{23}} = {\left[ { - 10,0,0} \right]^{\rm T}}$, ${\delta _{32}} = {\left[ {10,0,0} \right]^{\rm T}}$, ${\delta _{34}} = {\left[ {0, - 10,0} \right]^{\rm T}}$ and ${\delta _{43}} = {\left[ {0,10,0} \right]^{\rm T}}$. Additionally, the desired trajectory to be followed is given by $\eta _1^d\left( t \right) = {\left[ {30 - 30{e^{ - t}},5t,2t} \right]^{\rm T}}$, and the vessels' postures are expected to align to $\eta _2^d\left( t \right) = {\left[ {0,0,0} \right]^{\rm T}}$. The initial conditions of the four vessels are set as ${\eta _1(0)} = {\left[ {3,3,3,0.3,0,0.2} \right]^{\rm T}}$, ${\eta _2(0)} = {\left[ {2.5,3.5,3,0.2,0,0.25} \right]^{\rm T}}$, ${\eta _3(0)} = {\left[ {2,3,3,0.3,0,0.2} \right]^{\rm T}}$, ${\eta _4(0)} = {\left[ {3,3,2,0.3,0,0.2} \right]^{\rm T}}$, and ${v_i(0)} = {\mathbf{0}_{6 \times 1}},\left( {i \in \left\{ {1,2,3,4} \right\}} \right)$. 

The two baseline controllers used for comparison (i.e., learning-based backstepping control (LC) and learning-based sliding mode control (LSMC)) are given, respectively, as follows
\begin{align} 
    &\tau_{lc} = \bar B ^{-1} \left[ \dot{v}^d + \bar{C} v + \bar{D} v +\bar{G} -K_2 z   \right], \label{eq51_} \\
    &\tau_{lsmc} = \bar B ^{-1} \left[ \dot{v}^d + \bar{C} v + \bar{D} v +\bar{G} -K_2 s   \right],  \label{eq52_}
\end{align}
where parameter matrices $\bar B$, $\bar C$, $\bar D$, and $\bar G$ are all obtained with the on-line learning procedure \eqref{eq8a_} and \eqref{eq8b_} for both controllers; $v^{d}$ and $z$ are given by \eqref{eq28_} and \eqref{eq29__}, respectively, and the sliding mode variable used in \eqref{eq52_} is defined as
\begin{align}
    s = \text{sat} (z). \label{eq53_}
\end{align}
The control parameters used in the simulations are listed in TABLE \ref{t1},  and for convenience the proposed distributed bioinspired learning-based formation control protocol is shorten as the BLC scheme.
\begin{table*}[!htbp] 
\centering
\caption{Control parameters}
\begin{tabular}{cccc}   
\toprule 
Parameters & BLC & LC & LSMC \\
\midrule  
$L_i$ & diag(100,100,100,100,100,100)& diag(100,100,100,100,100,100) & diag(100,100,100,100,100,100)\\
$P_i$ & diag(0.1,0.1,0.1,0.1,0.1,0.1)& diag(0.1,0.1,0.1,0.1,0.1,0.1) & diag(0.1,0.1,0.1,0.1,0.1,0.1)\\
$K_{1,i}$ & diag(15,15,15,5,5,5) & diag(25,25,25,5,5,5) & diag(15,15,15,5,5,5)\\
$K_{2,i}$ & diag(1,1,1,0.5,0.5,0.5)&  diag(10,10,10,5,5,5)& diag(60,60,60,15,15,15) \\
$a_i$ & 10  & N/A & N/A\\
$b_i$ & 50 & N/A & N/A\\
$d_i$ & 50 & N/A & N/A\\
\bottomrule 
\end{tabular}
\label{t1}
\end{table*}

\begin{figure}[!htbp]
\centering
\includegraphics[width=0.4\textwidth]{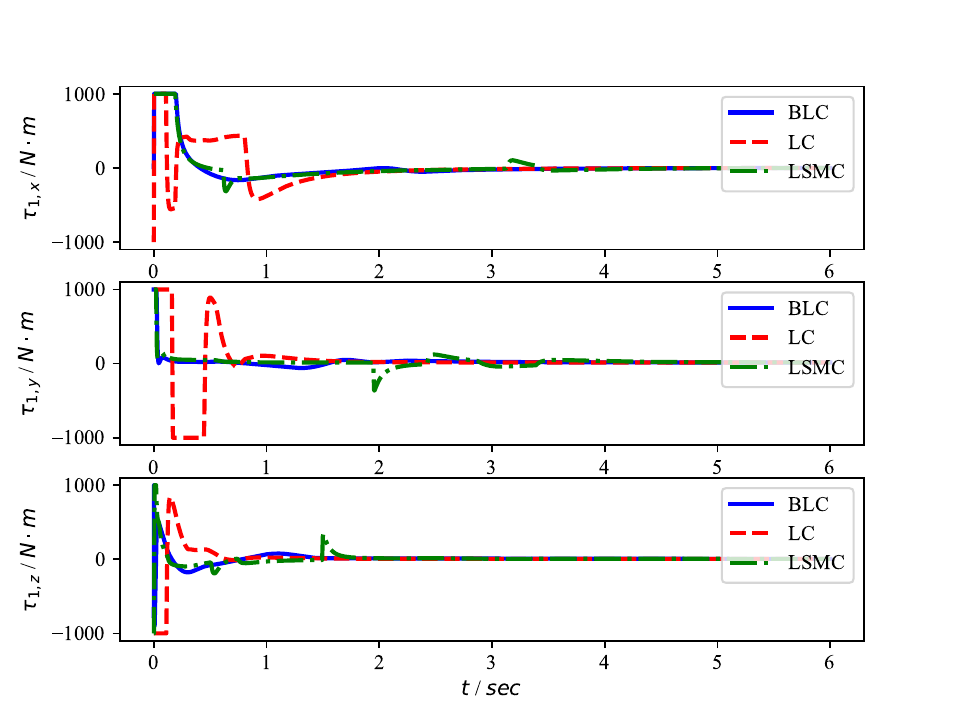}
\caption{The control signals of AUV 1 generated by the three types of controllers.}
\label{fig_7}
\end{figure}

\begin{figure}[!htbp]
\centering
\includegraphics[width=0.4\textwidth,trim=50 30 50 50,clip]{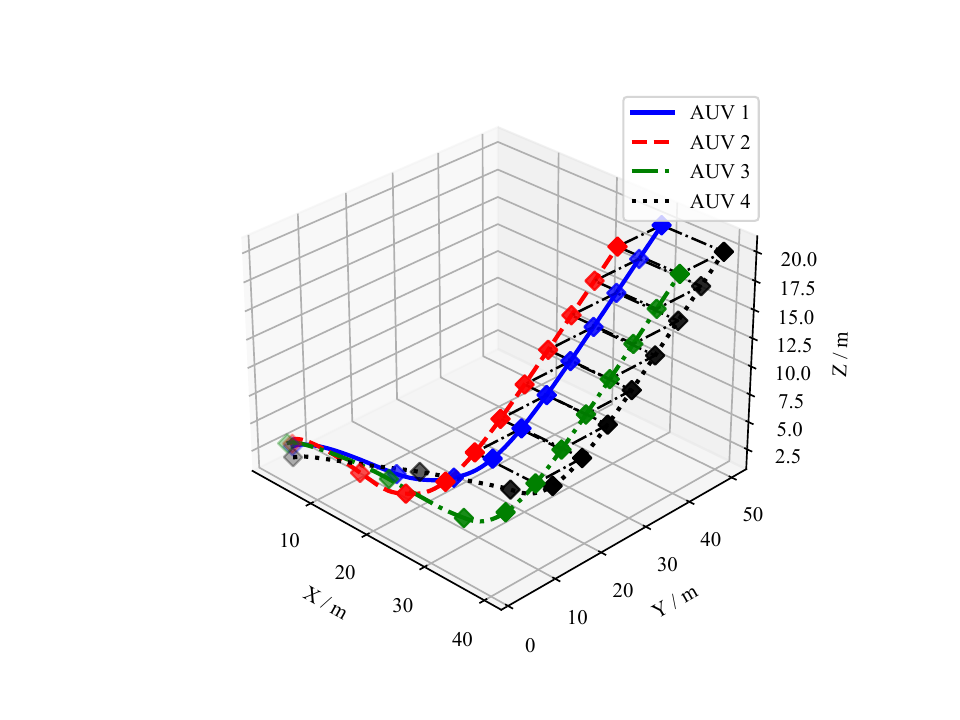}
\caption{The motion scene of overall formation system under BLC protocol.}
\label{fig_7_}
\end{figure}

In the first scenario, we compare the adaptive formation tracking performance of three types of distributed control protocol, i.e., BLC (proposed approach), LC, and LSMC, without applying disturbances and noises. It is illustrated by Figs. \ref{fig_3}--\ref{fig_6} that all three formation control protocols achieve the adaptive formation tracking objectives. In other words, the consensus formation tracking errors and introduced auxiliary variables of four vessels are all enforced to the zeros under the control activities, and in addition, the observation errors of the AUVs are all brought to zeros as well, indicating that the on-line learning procedures are in effect and able to provide real-time parameter identifications. Furthermore, it is observed evidently that the proposed bioinspired approach shows a more moderate performance over both the backstepping approach and sliding mode scheme. In particular, the LC approach behaves more aggressively, which necessitates relatively large velocity commands as indicated by the evolution of the auxiliary variables, and besides, more overshoots can be found in the entire control process. On the other hand, while the gain matrix $K_2$ of the LSMC scheme is deliberately tuned small enough to mitigate the chattering issue, the spikes still appear in the learning process in all of the vessels due to the employment of switching-like control law. The above statements can also be justified by checking the control activities of AUV 1 as shown in Fig. \ref{fig_7} (other AUVs' are pretty much similar), in which much more control efforts are used in the LC approach, while the faster convergence speed can be obtained, however, resulting in a more oscillating and unsmooth behavior. Similar to the LC strategy, the control of the LSMC scheme also exhibits an unsmooth behavior. In contrast, the BLC approach behaves more moderately and reasonably among all three types of control strategies, namely, less oscillation, low control efforts, and good smoothness. A 3-D motion scene of the overall AUVs formation system under BLC protocol is depicted in Fig. \ref{fig_7_}.

\begin{figure*}[!htbp] 
	\centering
	\subfloat[]{\includegraphics[width=0.26\textwidth]{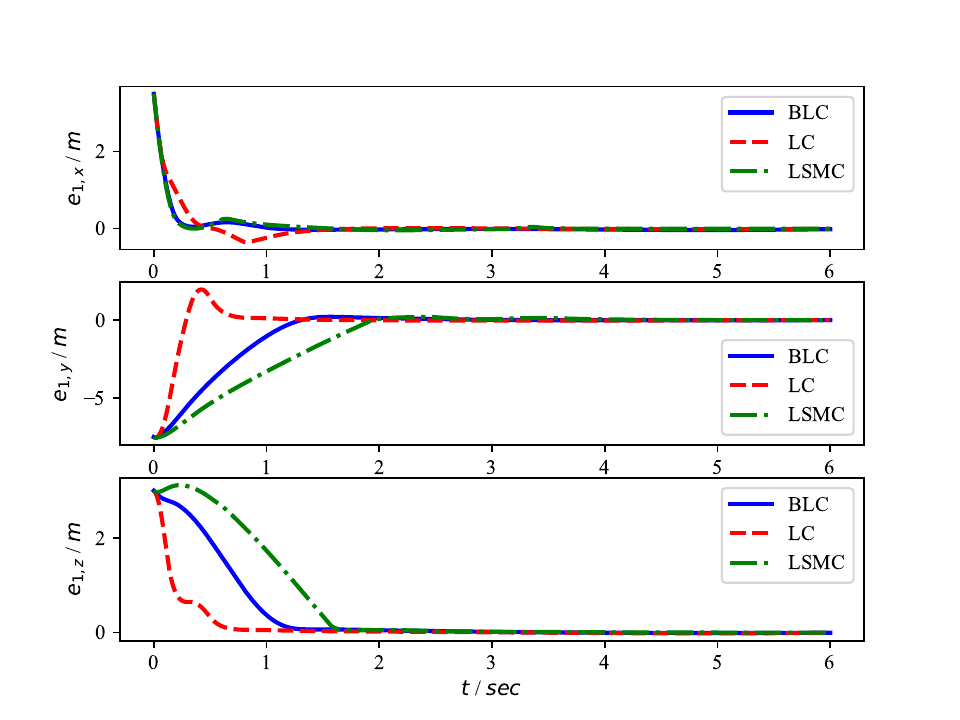}}\hspace{-0.2in} 
	\subfloat[]{\includegraphics[width=0.26\textwidth]{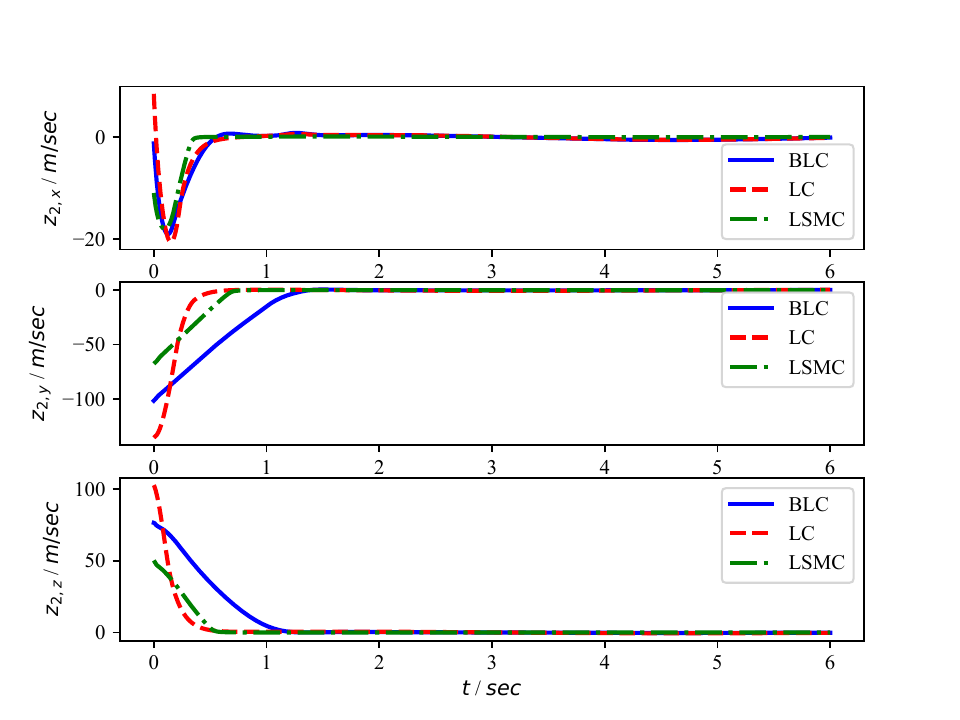}}\hspace{-0.2in} 
	\subfloat[]{\includegraphics[width=0.26\textwidth]{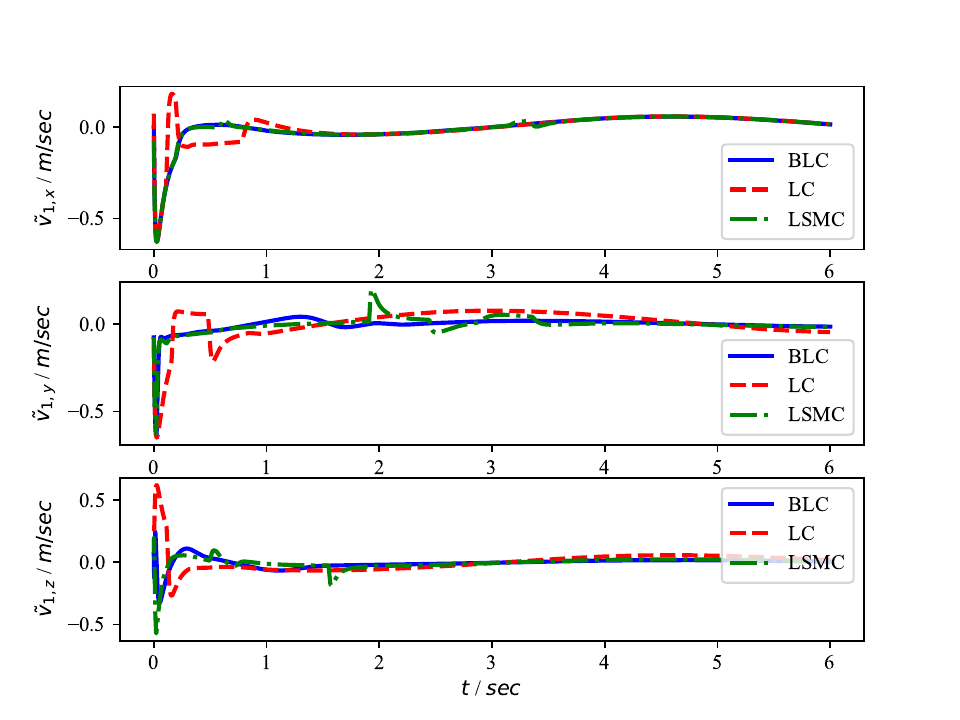}}\hspace{-0.2in} 
	\subfloat[]{\includegraphics[width=0.26\textwidth]{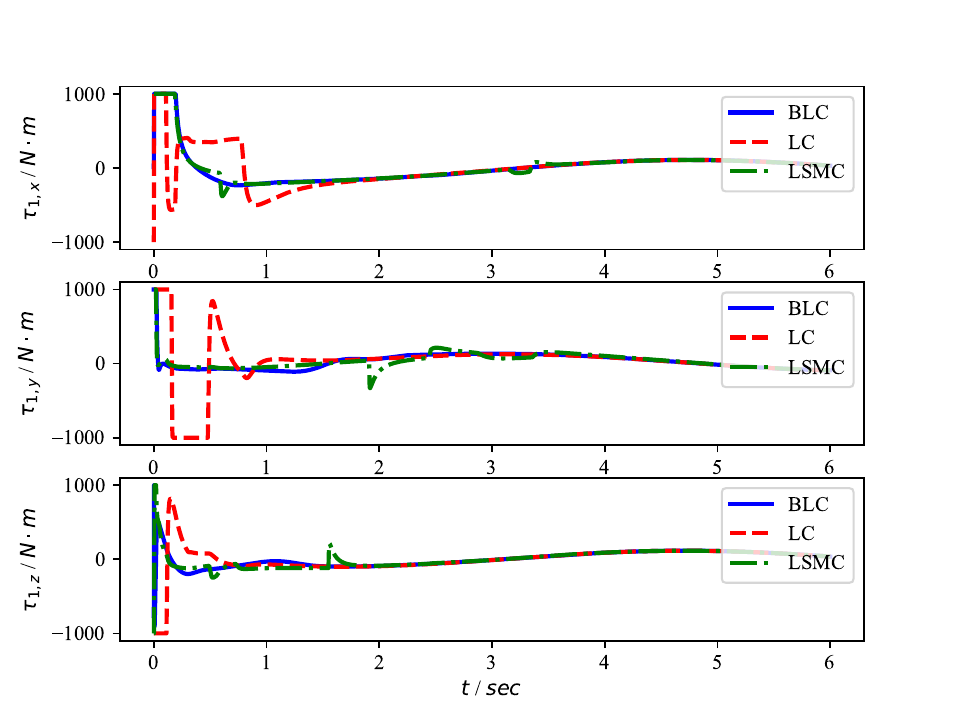}}
	\caption{The formation control performance of AUV 1 under three types of controllers. (a) The consensus formation tracking error. (b) The auxiliary variable. (c) The observation error. (d) The control signals.}
	\label{fig_8}
\end{figure*} 

Furthermore, in order to investigate the robustness performance of the proposed methodology, in the next two cases, the environmental disturbances and the noised measurements are involved in the formation system, respectively. It should be noticed that due to the fact that the proposed formation system is realized in a fully distributed manner, it will not lose the generality to show simply the performance of AUV 1, since actually all vessels behave in a very similar manner. The periodic external disturbances, induced by the ocean currents and waves, are described by the signals $d_i = \left[ {{110\sin \left( t \right),110\cos \left( t \right),110\sin \left( t \right)}}  \right.$, $\left. {{0.5\sin \left( t \right),0.5\cos \left( t \right),0.5\sin \left( t \right)}}  \right]$, $\left( {i \in \left\{ {1,2,3,4} \right\}} \right)$. It can be seen from Fig. \ref{fig_8} that all three schemes exhibit a robust behavior in terms of disturbance rejections. That is, while there exists the periodic bounded sin-type disturbance, the consensus tracking errors and auxiliary variables can still be driven into a very small neighborhood of the origin, as shown in Fig. \ref{fig_8}(a)(b). In particular, the developed learning procedure also works well when confronted with the disturbance. However, if we step further into the control behavior as illustrated in Fig. \ref{fig_8}(d), much more control efforts are needed for the LC approach to obtain this robust performance. Unsmooth control activities are observed in the LSMC scheme and further render an unsmooth learning process as seen in Fig. \ref{fig_8}(c). In contrast, the proposed BLC approach exhibits a far more consistent performance as in the unperturbed situation.
\begin{figure*}[!htbp] 
	\centering
	\subfloat[]{\includegraphics[width=0.26\textwidth]{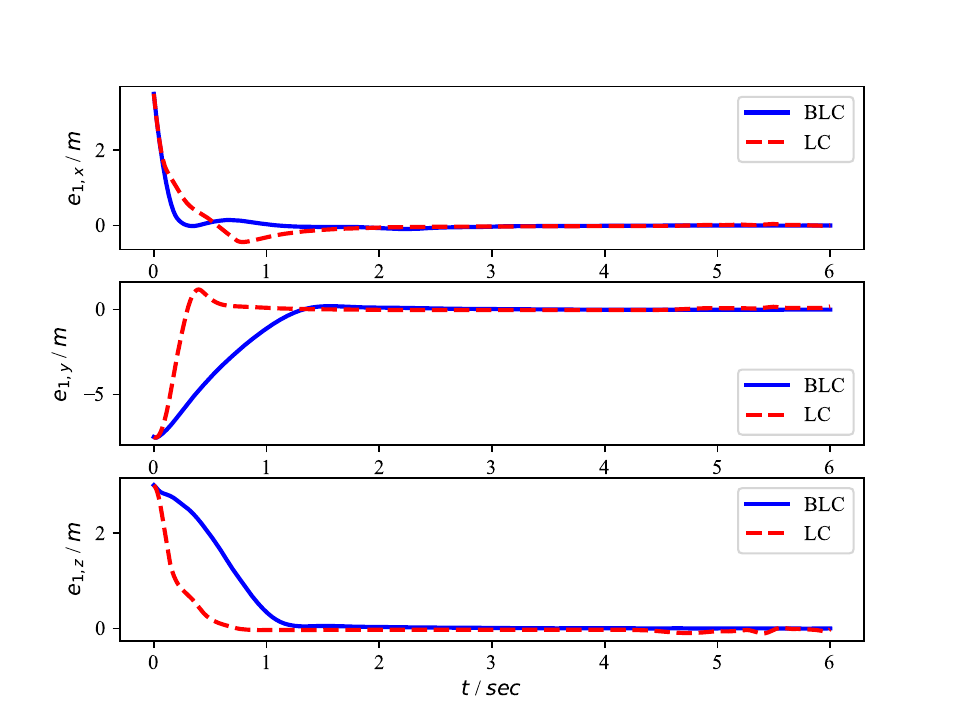}}\hspace{-0.2in} 
	\subfloat[]{\includegraphics[width=0.26\textwidth]{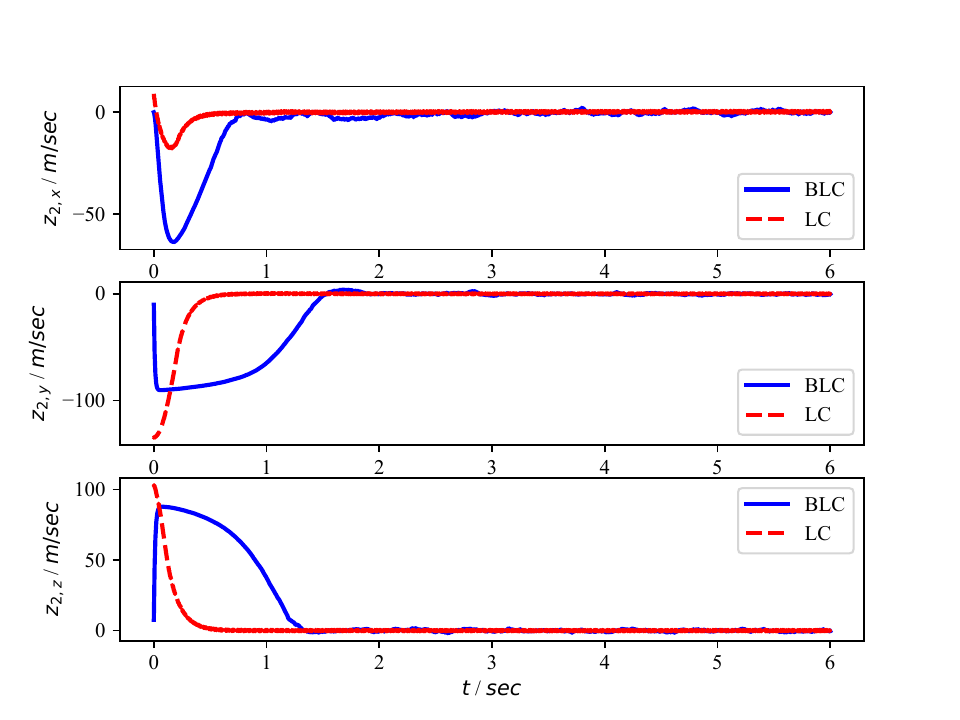}}\hspace{-0.2in} 
	\subfloat[]{\includegraphics[width=0.26\textwidth]{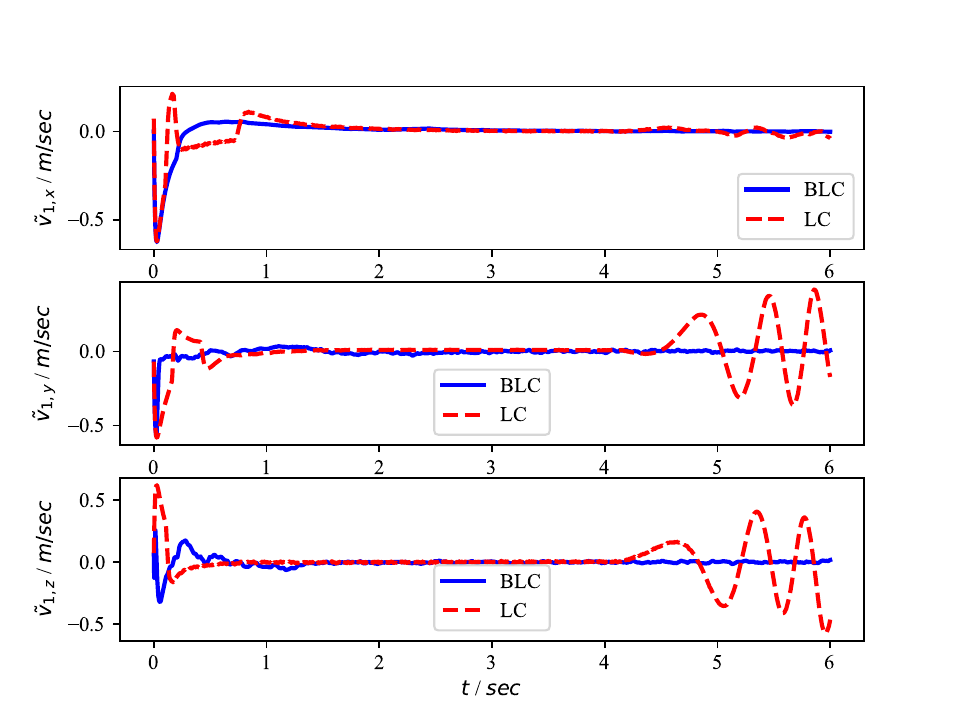}}\hspace{-0.2in} 
	\subfloat[]{\includegraphics[width=0.26\textwidth]{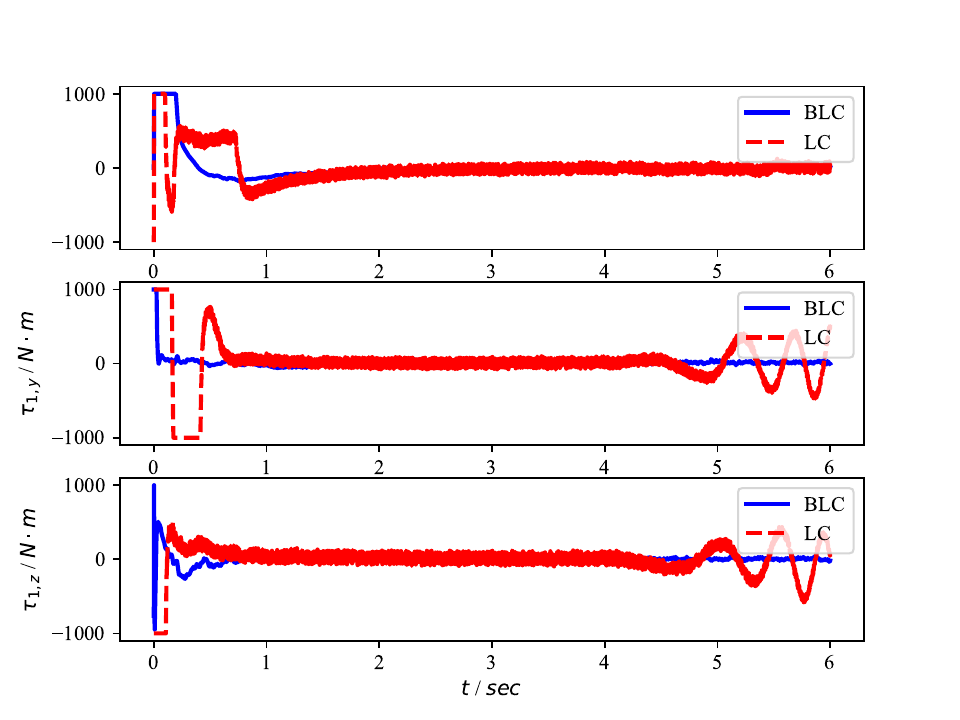}}
	\caption{The formation control performance of AUV 1 under BLC and LC controllers. (a) The consensus formation tracking error. (b) The auxiliary variable. (c) The observation error. (d) The control signals.}
	\label{fig_9}
\end{figure*}

\begin{figure*}[!htbp] 
	\centering
	\subfloat[]{\includegraphics[width=0.26\textwidth]{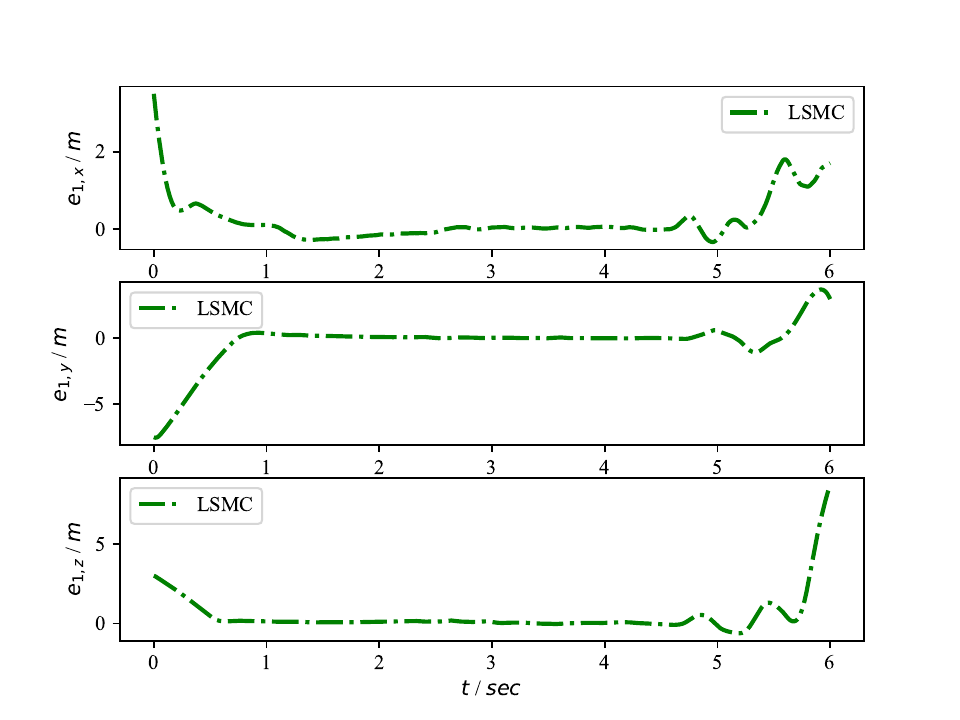}}\hspace{-0.2in} 
	\subfloat[]{\includegraphics[width=0.26\textwidth]{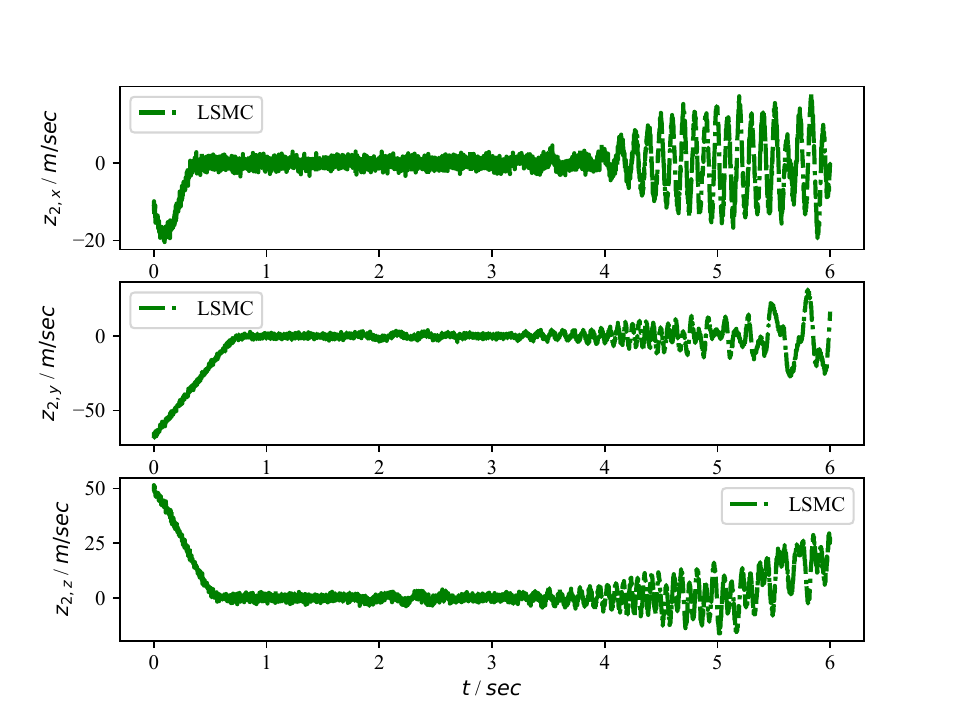}}\hspace{-0.2in} 
	\subfloat[]{\includegraphics[width=0.26\textwidth]{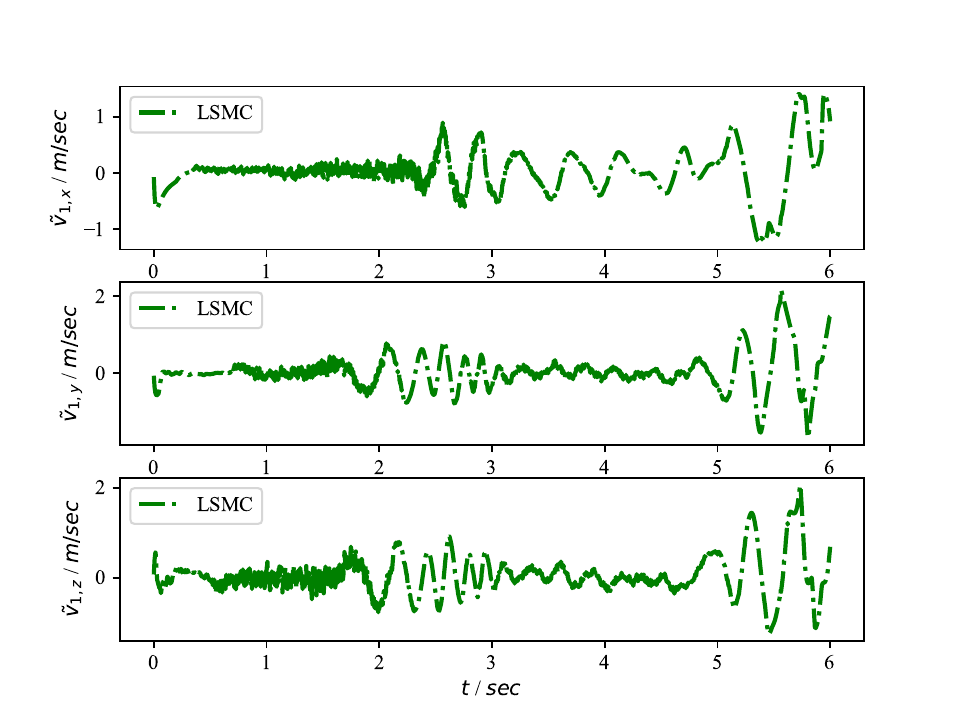}}\hspace{-0.2in} 
	\subfloat[]{\includegraphics[width=0.26\textwidth]{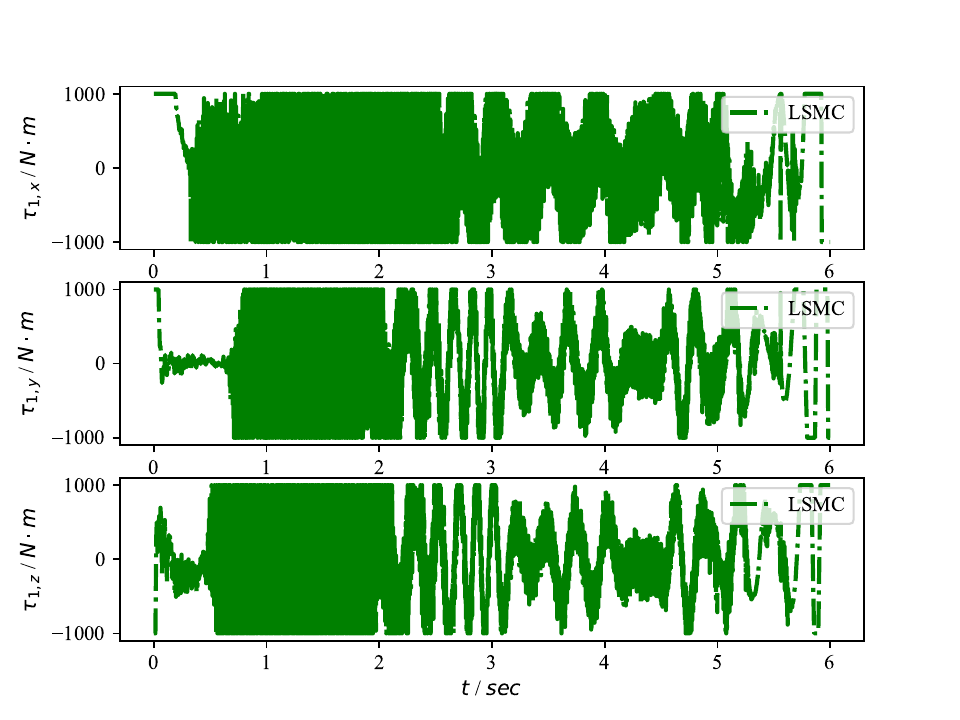}}
	\caption{The formation control performance of AUV 1 under LSMC controller. (a) The consensus formation tracking error. (b) The auxiliary variable. (c) The observation error. (d) The control signals.}
	\label{fig_10}
\end{figure*}

In the third case, the Gaussian measurement noise is injected into the control process to verify the robustness properties of the formation system in terms of noise suppression. The simulation results are shown in Figs. \ref{fig_9} and \ref{fig_10}, from which we observe that only the proposed BLC approach can achieve the consensus formation tracking objectives; that is, both LC and LSMC approaches fail to stabilize the formation system. It can be seen clearly from Fig. \ref{fig_9} that the LC approach can drive the consensus formation tracking errors into the zeros in the first four seconds, after which due to the persistent perturbation the system becomes unstable. The same results can also be given rise to in the LSMC driven formation system. Particularly, it can be shown from Fig. \ref{fig_10}(d) that the LSMC strategy is extremely sensitive to the noised measurements because of the intensive use of high frequency control activities, which results in a noised behavior in the learning process as illustrated in Fig. \ref{fig_10}(c) and ultimately ends up with an unstable system. In comparison, the proposed BLC solution exhibits a superb robustness against the Gaussian type noises; that is, the control activities are fairly smoother than the LC approach and the effects of the noises are sufficiently suppressed as shown in \ref{fig_9}(d), which also ensures a smooth learning process.

It is also observed in the simulations that the estimation gain matrices, i.e., $L_i$ and $P_i$ should be tuned first in order to ensure a smooth convergence for a successful learning process. Then, the selections of the control gains, i.e., $K_{1,i}$ and $K_{2,i}$, are dependent on the admissible control efforts as well as the desired robustness. It is clear that using the large control gains leads to better robustness properties but demands more control energy. It is worth noting that due to the integration with the shunting model, the proposed BLC approach, as shown in Fig. \ref{fig_8}, can deal with this trade-off effectively. That is, the parameter $\Lambda$ is adjustable to improve the overall system robustness while employing relatively small values of the control gains, which is beneficial for practical applications.

\section{CONCLUSION}\label{s6}
This paper is concerned with robust learning consensus formation tracking of fleets of marine vessels in 3D space where the dynamic parameters in the 6 DOF motion equations of vessels are considered to be totally unknown and subject to slow variations, and in addition, the impacts from the modeling errors, external disturbances, and measurement noises are taken into account. To this end, a novel fully distributed bio-inspired formation control protocol equipped with an online learning procedure is proposed. In more specific terms, the developed online learning procedure enables a real-time system identification so that the difficulties caused by the parameter unavailability and variations are handled effectively, and the steady formation accuracy can be thereby improved by applying an equivalence control law. Then, to obtain a robust solution against uncertainties and sensing noises while maintaining moderate control efforts, a neurodynamics model is integrated and the order of the resulting closed-loop system is thereby extended. The stability of the proposed distributed formation protocol is established to offer a theoretical guarantee for the desired robust adaptive formation performance. Furthermore, several commonly used nonlinear control schemes are compared by extensive simulation experiments, demonstrating the effectiveness and superiority of the presented methodology in terms of disturbance rejection, noise suppression, control activities, and formation accuracy. In the future, a more practical communication mechanism should be considered, for example, in the case when the out-degree information in the graph is difficult to access or subject to switching.




 
\bibliographystyle{IEEEtran}
\bibliography{reference}

\begin{IEEEbiography}[{\includegraphics[width=1in,height=1.25in,clip,keepaspectratio]{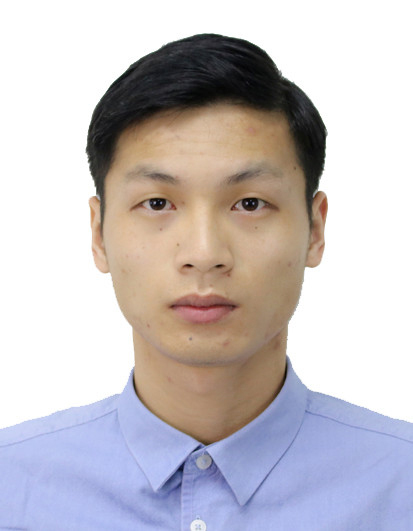}}]{Tao Yan}
(Graduate Student Member, IEEE) received the B.S. degree in automation from the
North China Institute of Aerospace Engineering, Langfang, China, in 2016, and the
M.S. degree in control science and engineering from the Zhejiang University of Technology, Hangzhou, China, in 2020. He is currently pursuing
his Ph.D. degree at the University of Guelph, ON, Canada. His research interests include the nonlinear control, machine learning, distributed control and optimization, optimal estimation, and networked underwater vehicle systems.
\end{IEEEbiography}

\vskip -2\baselineskip plus -1fil

\begin{IEEEbiography}[{\includegraphics[width=1in,height=1.25in,clip,keepaspectratio]{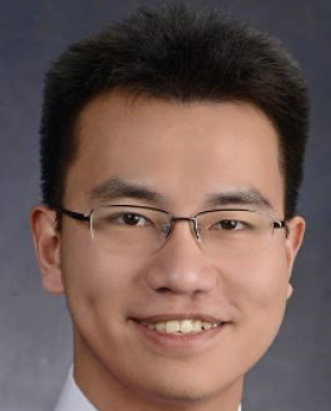}}]{Zhe Xu}
(Member, IEEE)  received B.ENG. degree in Mechanical Engineering in 2018, and M.A.Sc. and Ph.D degree in Engineering Systems and Computing in 2019 and 2023, respectively, from University of Guelph. He is currently a post-doctoral fellow with Department of Mechanical Engineering at McMaster University. His research interests include networked systems, tracking control, estimation theory, robotics, and intelligent systems.
\end{IEEEbiography}

\vskip -2\baselineskip plus -1fil

\begin{IEEEbiography}[{\includegraphics[width=1in,height=1.25in,clip,keepaspectratio]{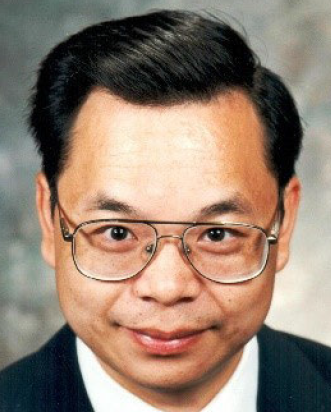}}]{Simon X. Yang}
(Senior Member, IEEE) received the B.Sc. degree in engineering physics from Beijing University, Beijing, China, in 1987, the first of two M.Sc. degrees in biophysics from the Chinese Academy of Sciences, Beijing, in 1990, the second M.Sc. degree in electrical engineering from the University of Houston, Houston, TX, in 1996, and the Ph.D. degree in electrical and computer engineering from the University of Alberta, Edmonton, AB, Canada, in 1999.  He is currently a Professor and the Head of the Advanced Robotics and Intelligent Systems (ARIS) Laboratory at the University of Guelph, Guelph, ON, Canada. His research interests include robotics, intelligent systems, control systems, sensors and multi-sensor fusion, wireless sensor networks, intelligent communication, intelligent transportation, machine learning, fuzzy systems, and computational neuroscience. 

Prof. Yang he has been very active in professional activities. He serves as the Editor-in-Chief of \textit{Intelligence \& Robotics}, and \textit{International Journal of Robotics and Automation}, and an Associate Editor of \textit{IEEE Transactions on Cybernetics}, \textit{IEEE Transactions of Artificial Intelligence}, and several other journals. He has involved in the organization of many international conferences.
\end{IEEEbiography}



\end{document}